\newcommand{\dquote}[1]{``#1''}
\newcommand{\labs}{\left\vert}
\newcommand{\rabs}{\right\vert}
\newcommand{\lnorm}{\left\Vert}
\newcommand{\rnorm}{\right\Vert}
\newcommand{\opt}{\mathrm{opt}}
\newcommand{\real}{\mathbb{R}}
\newcommand{\expect}{\mathbb{E}}
\newcommand{\indict}{\mathbb{I}}
\newtheorem{thm}{Theorem}
\newtheorem{lem}{Lemma}
\newtheorem{prop}{Proposition}
\newtheorem{oracle}{Oracle}
\newtheorem{rem}{Remark}
\crefname{thm}{Theorem}{Theorems}
\crefname{lem}{Lemma}{Lemmas}
\crefname{cor}{Corollary}{Corollaries}
\crefname{prop}{Proposition}{Propositions}
\crefname{asmp}{Assumption}{Assumptions}
\crefname{defn}{Definition}{Definitions}
\crefname{oracle}{Oracle}{Oracles}
\crefname{fact}{Fact}{Facts}
\crefname{conj}{Conjecture}{Conjectures}
\crefname{rem}{Remark}{Remarks}
\crefname{example}{Example}{Examples}
\crefname{condition}{Condition}{Conditions}
\crefname{exercise}{Exercise}{Exercises}
\crefname{algorithm}{Algorithm}{Algorithms}
\crefname{table}{Table}{Tables}
\crefname{figure}{Figure}{Figures}
\crefname{section}{Section}{Sections}
\crefname{subsection}{Section}{Sections}
\crefname{appendix}{Appendix}{Appendices}
\crefname{message}{Message}{Messages}
\definecolor{red}{rgb}{1, 0, 0}
\newcommand{\RED}[1]{{\color{red} #1}}
\definecolor{green}{rgb}{0, 1, 0}
\definecolor{blue}{rgb}{0, 0, 1}
\newcommand{\BLUE}[1]{{\color{blue} #1}}
\definecolor{orange}{rgb}{1, 0.4, 0.0}
\title{A Note on Target Q-learning For Solving Finite MDPs with A  Generative Oracle}
\newcommand{\spa}{\text{span}}
\author{Ziniu Li\thanks{Equal contribution.} \\
	Shenzhen Research Institute of Big Data \\ The Chinese University of Hong Kong, Shenzhen \\
	\texttt{ziniuli@link.cuhk.edu.cn} \\
	\And
	Tian Xu$^{*}$ \\ 
    National Key Laboratory for Novel Software Technology \\
    Nanjing University \\
	\texttt{xut@lamda.nju.edu.cn} \\
	\And 
	Yang Yu\thanks{Corresponding author.} \\ 
    National Key Laboratory for Novel Software Technology \\
    Nanjing University \\
	\texttt{yuy@nju.edu.cn} \\
}
\begin{document}
\maketitle

\begin{abstract}
Q-learning with function approximation could diverge in the off-policy setting and the target network is a powerful technique to address this issue. In this manuscript, we examine the sample complexity of the associated target Q-learning algorithm in the tabular case with a generative oracle. We point out a misleading claim in \citep{lee2020periodic} and establish a tight analysis. In particular, we demonstrate that the sample complexity of the target Q-learning algorithm in \citep{lee2020periodic} is $\widetilde{\gO}(|\gS|^2|\gA|^2 (1-\gamma)^{-5}\varepsilon^{-2})$. Furthermore, we show that this sample complexity is improved to $\widetilde{\gO}(|\gS||\gA| (1-\gamma)^{-5}\varepsilon^{-2})$ if we can sequentially update all state-action pairs and  $\widetilde{\gO}(|\gS||\gA| (1-\gamma)^{-4}\varepsilon^{-2})$ if $\gamma$ is further in $(1/2, 1)$. Compared with the vanilla Q-learning, our results conclude that the introduction of a periodically-frozen target Q-function does not sacrifice the sample complexity.
\end{abstract}

\section{Introduction}

Q-learning is one of the most simple yet popular algorithms in the reinforcement learning (RL) community \citep{sutton2018reinforcement}. However, Q-learning suffers the divergence issue when (linear) function approximation is applied \citep{baird1995residual, tsitsiklis1997analysis}. To address this instability issue, a technique called \emph{target network} is proposed in the famous DQN algorithm \citep{mnih2015human}. In particular, DQN implements a duplication of the main Q-network (i.e., the so-called target network), which is further used to generate the bootstrap signal for updates. One important feature is that the target network is fixed over intervals. Unlike Q-learning, the learning targets do not change during an interval for DQN.  In \citep[Table 3]{mnih2015human}, it is reported that the target network contributes a lot to the superior performance of DQN.

Since then, it has been an active area of research to theoretically understand the target network technique \citep{lee2019target, lee2020periodic, fan2020theoretical, zhang2021breaking, agarwal2022online, chen2022target} and design variants based on this technique \citep{lillicrap2015ddpg, fujimoto2018td3, haarnoja2018sac, carvalho2020convergent}.
In this manuscript, we take a \dquote{sanity check}: we examine the sample complexity of target Q-learning in the tabular case with a generative oracle. We want to know whether target Q-learning sacrifices the sample complexity as it periodically freezes the target Q-function, which is believed to \dquote{may ultimately slow down training} in \citep{piche2021beyond}.

First, we revisit the target Q-learning algorithm and analysis in \citep{lee2020periodic}. In particular, once the target Q-function is fixed, this algorithm randomly picks up a state-action pair to perform the stochastic gradient descent (SGD) update. To avoid the confusion with algorithms introduced later, we call this algorithm \textbf{StoTQ-learning} (stochastic target Q-learning).  In particular, \citet{lee2020periodic} showed that the sample complexity of StoTQ-learning is $\widetilde{\gO}(|\gS|^3|\gA|^3 (1-\gamma)^{-4} \varepsilon^{-2} \log^{-1}(1/\gamma))$, where $|\gS|$ is the number of states, $|\gA|$ is the number of actions, $\gamma \in (0, 1)$ is the discount factor, and $\varepsilon \in (0, 1/(1-\gamma))$ is the error between the obtained Q-function and the optimal Q-function with respect to the $\ell_\infty$-norm. We point out that \citep{lee2020periodic} made a mis-claim that the dependence on the effective horizon is $\gO(1/(1-\gamma)^{4})$ as they ignored that $1/\log(1/\gamma) = \gO(1/(1-\gamma))$. In other word, the correct dependence on the effective horizon is $\gO(1/(1-\gamma)^{5})$. As one can see, this sample complexity suffers a poor dependence on the problem size $|\gS| \times |\gA|$. To this end, we refine the analysis in \citep{lee2020periodic} and builds a tighter upper bound on the variance of the SGD update. Consequently, we show that StoTQ-learning enjoys a sample complexity $\widetilde{\gO}(|\gS|^2|\gA|^2 (1-\gamma)^{-5} \varepsilon^{-2})$, in which the dependence on $1/(1-\gamma)$ is same with phased Q-learning \citep{kearns1999finite} and Q-learning  \citep{wainwright2019stochastic}.

Second, we demonstrate that the dependence on the problem size can be improved to $\gO(|\gS||\gA|)$ if we sequentially update state-action pairs for target Q-learning. We call such an algorithm \textbf{SeqTQ-learning} (sequential target Q-learning). Technically, SeqTQ-learning ensures that all state-action pairs can be updated after one \dquote{epoch}, which cannot be achieved by StoTQ-learning since StoTQ-learning randomly picks up a state-action pair to update during an \dquote{epoch}. In particular, the proposed modification is similar to the \dquote{random shuffling} technique in the deep-learning community, which is shown to reduce the variance compared with the original SGD update for finite-sum optimization (see \citep{mishchenko2020random} and references therein). 

Finally, we conclude that if $\gamma \in (1/2, 1)$, the sample complexity of SeqTQ-learning is improved to $\widetilde{\gO}(|\gS||\gA| (1-\gamma)^{-4} \varepsilon^{-2} )$, which is identical with the sharp sample complexity of Q-learning in \citep{li2021q}. This good result builds on the tight analysis in \citep{li2021q,agarwal2022online}. Therefore, we conclude that compared with the vanilla Q-learning, the introduction of a periodically-frozen target Q-function does not sacrifice the statistical accuracy in the tabular case with a generative oracle.

\begin{table}[t]
\caption{Sample complexity of Q-learning, phased Q-learning, and target Q-learning for solving finite MDPs with a generative oracle.}
\label{tab:summary}
\centering
\begin{tabular}{c|c|c}
Algorithm/Lower Bound & Sample Complexity & $\gamma$ \\ 
\hline 
Phased Q-learning \citep{kearns1999finite}   & $\widetilde{\gO}\lp  \frac{|\gS||\gA|}{(1-\gamma)^{5} \varepsilon^2}\rp$  & $(0, 1)$ \\
Q-learning \citep{wainwright2019stochastic} & $\widetilde{\gO}\lp  \frac{|\gS||\gA|}{(1-\gamma)^{5} \varepsilon^2}\rp$ & $(0, 1)$ \\
Stochastic Target Q-learning \citep{lee2020periodic} & $\widetilde{\gO}\lp  \frac{|\gS|^{3}|\gA|^{3}}{(1-\gamma)^{5} \varepsilon^2}\rp$  & $(0, 1)$ \\
Stochastic Target Q-learning (\cref{theorem:stochastic_target_q}) & $\widetilde{\gO}\lp  \frac{|\gS|^2|\gA|^2}{(1-\gamma)^{5} \varepsilon^2}\rp$ & $(0, 1)$ \\
Sequential Target Q-learning  (\cref{theorem:sample_complexity_of_sequential_target_q}) & $\widetilde{\gO}\lp  \frac{|\gS||\gA|}{(1-\gamma)^{5} \varepsilon^2}\rp$  & $(0, 1)$ \\ 
\hline 
Q-learning \citep{li2021q} & $\widetilde{\gO}\lp  \frac{|\gS||\gA|}{(1-\gamma)^{4} \varepsilon^2}\rp$ & $(1/2, 1)$ \\ 
Sequential Target Q-learning  (\cref{theorem:tight_sample_complexity_of_sequential_target_q})  & $\widetilde{\gO}\lp \frac{|\gS||\gA|}{(1-\gamma)^{4} \varepsilon^2} \rp$ & $(1/2, 1)$ \\ 
\hline 
Lower Bound \citep{AzarMK13} & $\Omega \lp \frac{|\gS||\gA|}{(1-\gamma)^{3} \varepsilon^2} \rp$  & $(0, 1)$
\end{tabular}
\end{table}

\section{Preliminary}

An infinite-horizon Markov Decision Process (MDP) \citep{puterman2014markov} can be describe by a tuple $\mathcal{M} = \langle \mathcal{S}, \mathcal{A}, P, r, \gamma, d_0 \rangle$. Here $\mathcal{S}$ and $\mathcal{A}$ are the state and the action space, respectively. We assume that both $\mathcal{S}$ and $\mathcal{A}$ are finite. Here $P(s^{\prime}|s, a)$ specifies the transition probability of the next state $s^\prime$ based on current state $s$ and current action $a$. The quality of each action $a$ on state $s$ is judged by the reward function $r(s, a)$. Without loss of generality, we assume that $r(s, a) \in [0, 1]$, for all $(s, a) \in \mathcal{S} \times \mathcal{A}$ throughout. Finally, $\gamma \in [0, 1)$ is a discount factor, weighting the importance of future returns, and $d_0$ specifies the initial state distribution.

From the view of the agent, it maintains a policy $\pi$ to select actions based on $\pi(a|s)$. The quality of a policy $\pi$ is measured by the state-action value function $Q^{\pi}(s, a) := \expect[ \sum_{t = 0}^{\infty} \gamma^{t} r(s_t, a_t) \mid s_0 = s, a_0  =a ] $, i.e., the cumulative discounted rewards starting from $(s, a)$. According to the theory of MDP \citep{puterman2014markov}, there exists an optimal policy $\pi^{\star}$ such that its state-action value function is optimal, i.e., $Q^{\pi^{\star}}(s, a) = \max_{\pi} Q^{\pi}(s, a)$ for all $(s, a) \in \gS \times \gA$. For simplicity, let $Q^{\star}$ denote the optimal state-action value function, which further satisfies the Bellman equation:
\begin{align*}
    Q^{\star}(s, a) = r(s, a) + \gamma \expect_{s^\prime \sim P(\cdot|s, a)}\ls \max_{a^\prime} Q^{\star}(s^\prime, a^\prime) \rs, \quad \forall (s, a) \in \gS \times \gA.
\end{align*}
Let us define the Bellman operator $\gT: \real^{|\gS||\gA|} \rar \real^{|\gS||\gA|}$,
\begin{align*}
    \gT Q(s, a) =  r(s, a) + \gamma \expect_{s^\prime \sim P(\cdot|s, a)}\ls \max_{a^\prime} Q(s^\prime, a^\prime) \rs.
\end{align*}
It is obvious that $Q^{\star}$ is the unique fixed point of $\gT$. Furthermore, $\gT$ is $\gamma$-contractive with respect to the $\ell_{\infty}$-norm:
\begin{align*}
    \forall Q_1, Q_2, \quad \lnorm \gT Q_1 - \gT Q_2 \rnorm_{\infty} \leq \gamma \lnorm Q_1 - Q_2 \rnorm_{\infty}.
\end{align*}
As a result, we can perform the fixed point iteration to solve $Q^{\star}$, which is known as the value iteration algorithm \citep{puterman2014markov}. However,
if the transition function $P$ is unknown and we have access to the sample $(s, a, r, s^\prime)$, we can define the empirical Bellman operator $\widehat{\gT}$ for a state-action value function $Q$:
\begin{align*}
    \widehat{\gT} Q(s, a) =  r(s, a) + \gamma  \max_{a^\prime} Q(s^\prime, a^\prime), \quad s^\prime \sim P(\cdot|s, a). 
\end{align*}
With the noisy estimate $\widehat{\gT}$, we can implement the stochastic approximation and such an algorithm is called Q-learning \citep{watkins1992q}. Without loss of generality, we assume the reward function is known.

\section{Algorithms and Main Results}

In this section, we investigate the sample complexity of two target Q-learning algorithms with a generative oracle (see \cref{oracle:generative_model}). In particular, the generative oracle provides a simple way of studying the sample complexity by allowing i.i.d. samples. Nevertheless, results under the setting of i.i.d. samples can be extended to the Markovian case by the coupling arguments (see for example \citep{nagaraj2020least, agarwal2022online}).

\begin{oracle}[Generative Oracle]   \label{oracle:generative_model}
Given a state-action pair $(s, a)$, the oracle returns the next state $s^\prime$ by independently sampling from the transition function $P(\cdot|s, a)$.
\end{oracle}

\subsection{Stochastic Target Q-learning}

First, we focus on the algorithm proposed in \citep{lee2020periodic} (see \cref{algo:stochastic_target_q}), which is re-named after Stochastic Target Q-learning (StoTQ-learning) for ease of presentation. In particular, we consider the simplified version where $(s, a)$ is uniformly sampled from $\gS \times \gA$ in Line 5 of \cref{algo:stochastic_target_q}. For the update rule in Line 6,  it can be viewed as one-step stochastic gradient descent of the following optimization problem:
\begin{align} \label{eq:target_q_main_objective}
   \min_{Q \in \real^{|\gS| |\gA|}} L(Q; Q_k) := \frac{1}{2|\gS||\gA|} \sum_{(s, a)} \lp \gT Q_k(s, a) - Q(s, a)  \rp^2, 
\end{align}
Specifically, the randomness comes from the sample index $(s, a)$ and the label noise in $\gT Q_k(s, a)$ because we use the empirical Bellman update $r(s, a) + \gamma \max_{a^\prime } Q_k(s^\prime, a^\prime)$. From this viewpoint, it is reasonable that $Q_{k, T}$ is close to $\gT Q_k$ as long as the step size is properly designed and the iteration number $T$ is sufficiently large. Consequently, StoTQ-learning generates a sequence $\{Q_k\}$, which performs the approximate Bellman update as the phased Q-learning algorithm (a.k.a. sampling-based value iteration) \citep{kearns1999finite}. This connection is clear in the following error bound.

\begin{algorithm}[t]
\begin{algorithmic}[1]
\caption{Stochastic Target Q-learning (StoTQ-learning)}
\label{algo:stochastic_target_q}
\Require{outer loop iteration number $K$, inner loop iteration number $T$, initialization $Q_0$, and step-sizes $\{\eta_t\}$. }
\For{iteration $k = 0, 1, \cdots, K-1$ }
\State{set $Q_{k, 0} = Q_k$.}
\For{iteration $t = 0, 1, \cdots, T-1$}
\State{randomly pick up $(s, a)$, calculate $r(s, a)$, and obtain $s^\prime$ by the generative oracle.} 
\State{update $Q_{k, t+1}(s, a) = Q_{k, t}(s, a) + \eta_t ( r(s, a) + \gamma \max_{a^\prime} {Q}_k(s^\prime, a^\prime) - Q_{k, t}(s, a) )$.}
\EndFor
\State{set $Q_{k+1} = Q_{k, T}$.}
\EndFor
\Ensure{$Q_K$}
\end{algorithmic}
\end{algorithm}

\begin{lem}[Proposition 1 of \citep{lee2020periodic}]  \label{lemma:outer_convergence_stochastic_target_q} For each outer iteration $k$, suppose that the optimization error of the inner loop satisfies that $\expect[ \Vert Q_{k} - \gT Q_{k-1} \Vert_2^2 ] \leq \varepsilon_{\opt}$ for all $k \leq K$. Then, we have that 
\begin{align*}
    \expect\ls \lnorm Q_{K} - Q^{\star} \rnorm_{\infty} \rs \leq \frac{\sqrt{\varepsilon_{\opt}}}{1-\gamma} + \gamma^{K} \expect\ls \lnorm Q_0 - Q^{\star} \rnorm_{\infty} \rs. 
\end{align*}
\end{lem}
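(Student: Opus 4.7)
The plan is a standard error-propagation argument for approximate value iteration, with the only subtlety being how to convert the $\ell_2$-norm assumption on the inner-loop error into the $\ell_\infty$-norm statement. I would first use the triangle inequality and the fact that $Q^{\star}$ is the fixed point of $\gT$ to split the error at step $k$ into an \emph{optimization error} term and a \emph{propagation} term, i.e.,
\begin{align*}
\lnorm Q_k - Q^{\star} \rnorm_{\infty} \;\leq\; \lnorm Q_k - \gT Q_{k-1} \rnorm_{\infty} + \lnorm \gT Q_{k-1} - \gT Q^{\star} \rnorm_{\infty}.
\end{align*}
The second term is then controlled by the $\gamma$-contractivity of $\gT$ with respect to $\lnorm \cdot \rnorm_{\infty}$, giving the one-step recursion
$\lnorm Q_k - Q^{\star} \rnorm_{\infty} \leq \lnorm Q_k - \gT Q_{k-1} \rnorm_{\infty} + \gamma \lnorm Q_{k-1} - Q^{\star} \rnorm_{\infty}$.

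Next I would bridge the two norms using the deterministic inequality $\lnorm v \rnorm_\infty \leq \lnorm v \rnorm_2$ (valid for any finite-dimensional vector), and then take expectations. Applying Jensen's inequality to the concave function $\sqrt{\cdot}$ yields
\begin{align*}
\expect\ls \lnorm Q_k - \gT Q_{k-1} \rnorm_{\infty} \rs \;\leq\; \expect\ls \lnorm Q_k - \gT Q_{k-1} \rnorm_{2} \rs \;\leq\; \sqrt{\expect\ls \lnorm Q_k - \gT Q_{k-1} \rnorm_2^2 \rs} \;\leq\; \sqrt{\varepsilon_{\opt}},
\end{align*}
which turns the recursion into
$\expect[\lnorm Q_k - Q^{\star} \rnorm_\infty] \leq \sqrt{\varepsilon_{\opt}} + \gamma \, \expect[\lnorm Q_{k-1} - Q^{\star} \rnorm_\infty]$.

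Finally, I would unroll this recursion from $k = K$ down to $k = 0$ and sum the resulting geometric series in $\gamma$, which gives
\begin{align*}
\expect\ls \lnorm Q_K - Q^{\star} \rnorm_{\infty} \rs \;\leq\; \sqrt{\varepsilon_{\opt}} \sum_{j=0}^{K-1} \gamma^j + \gamma^K \expect\ls \lnorm Q_0 - Q^{\star} \rnorm_{\infty} \rs \;\leq\; \frac{\sqrt{\varepsilon_{\opt}}}{1-\gamma} + \gamma^K \expect\ls \lnorm Q_0 - Q^{\star} \rnorm_{\infty} \rs.
\end{align*}
There is no real obstacle here; the argument is essentially a textbook approximate value iteration analysis. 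The only place that required a small design choice is the $\ell_2 \!\to\! \ell_\infty$ step combined with Jensen, since the assumption is stated in squared $\ell_2$ whereas the contraction is in $\ell_\infty$; this loses potentially a factor of $\sqrt{|\gS||\gA|}$ if one were to compare with an $\ell_\infty$ inner-loop bound, and it is precisely this slack that drives the $|\gS|^2 |\gA|^2$ scaling that later sections of the paper work to improve.
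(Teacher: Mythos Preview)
Your proof is correct and matches the standard approximate-value-iteration template. The paper does not reprove \cref{lemma:outer_convergence_stochastic_target_q} itself (it is cited as Proposition~1 of \citep{lee2020periodic}), but the paper's own proof of the analogous \cref{lemma:outer_loop_sequential_target_q} follows exactly your triangle-inequality $+$ $\gamma$-contraction $+$ unroll-the-recursion argument; the only extra ingredient you need here is the $\lnorm \cdot \rnorm_\infty \leq \lnorm \cdot \rnorm_2$ step together with Jensen, which you handle correctly.
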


\cref{lemma:outer_convergence_stochastic_target_q} claims that to control the final error $\varepsilon$, it is essential to ensure the optimization error is small for each inner loop. Since the learning targets are generated by a fixed variable $Q_k$ in the inner loop, SGD is stable for the optimization problem \eqref{eq:target_q_main_objective} (see \citep{bottou2018siam} and references therein). As a consequence, we expect that $\varepsilon_{\opt}$ is well-controlled. In terms of the analysis, the key is to upper bound the variance of stochastic gradients. Let $\nabla L(Q; Q_k)$ be the true gradient and  $\widetilde{\nabla} L(Q; Q_k)$ be the stochastic gradient. 

\begin{lem}[Lemma 7 of \citep{lee2020periodic}] \label{lemma:sgd_variance_loose}
For any $Q \in \real^{|\gS||\gA|}$, we have 
\begin{align*}
    \expect\ls \lnorm \widetilde{\nabla} L(Q; Q_k) \rnorm_2^2\rs \leq 8 |\gS||\gA| \lnorm \nabla L(Q; Q_k) \rnorm_2^2  + 12 \gamma^2 |\gS||\gA| \lnorm Q_k - Q^{\star} \rnorm_{\infty}^2  + \frac{18|\gS||\gA|}{(1-\gamma)^2}. 
\end{align*}
\end{lem}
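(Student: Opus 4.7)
The plan is to directly compute the expected squared norm of the stochastic gradient by conditioning on the sampled state-action pair, and then decompose the Bellman residual into a conditional-mean part (which reproduces the true gradient) and a zero-mean noise part (whose variance we bound through the sup-norm of $Q_k$).

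First, I would identify the stochastic gradient explicitly. With $(s,a)$ sampled uniformly from $\gS \times \gA$ and $s' \sim P(\cdot \mid s,a)$, the update in Line 6 of \cref{algo:stochastic_target_q} corresponds to the stochastic gradient
\[
\widetilde{\nabla} L(Q; Q_k) = -e_{(s,a)}\bigl(r(s,a) + \gamma \max_{a'} Q_k(s',a') - Q(s,a)\bigr),
\]
where $e_{(s,a)}$ denotes the unit vector of $\real^{|\gS||\gA|}$ at coordinate $(s,a)$. Unbiasedness $\expect[\widetilde{\nabla} L] = \nabla L$ follows by averaging first over $s'$ and then over $(s,a)$. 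Consequently $\lnorm \widetilde{\nabla} L(Q; Q_k)\rnorm_2^2 = \hat{\delta}(s,a,s')^2$ where $\hat{\delta}(s,a,s') := r(s,a) + \gamma \max_{a'} Q_k(s',a') - Q(s,a)$.

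Next, I would split the residual via the signal-plus-noise decomposition
\[
\hat{\delta}(s,a,s') = \bigl(\gT Q_k(s,a) - Q(s,a)\bigr) + \gamma\bigl(\max_{a'}Q_k(s',a') - \expect_{s'' \sim P(\cdot|s,a)}\max_{a'}Q_k(s'',a')\bigr),
\]
apply the elementary inequality $(u+v)^2 \leq 2u^2 + 2v^2$, and take expectation first over $s'$ and then over $(s,a)$. The mean-square of the first (signal) piece, summed over $(s,a)$ against the uniform weight $1/|\gS||\gA|$, produces a multiple of $|\gS||\gA|\cdot \lnorm \nabla L(Q; Q_k)\rnorm_2^2$, since $\lnorm \nabla L(Q; Q_k)\rnorm_2^2 = (|\gS||\gA|)^{-2}\sum_{(s,a)}(\gT Q_k(s,a) - Q(s,a))^2$. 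For the noise piece, I would bound its conditional variance by its conditional second moment, which is at most $\lnorm Q_k\rnorm_\infty^2$. Combining the triangle inequality with the standard estimate $\lnorm Q^\star\rnorm_\infty \leq 1/(1-\gamma)$ (which follows from the geometric series on $r \in [0,1]$) yields $\lnorm Q_k\rnorm_\infty^2 \leq 2\lnorm Q_k - Q^\star\rnorm_\infty^2 + 2/(1-\gamma)^2$. Substituting this into the noise contribution produces one term proportional to $\gamma^2 \lnorm Q_k - Q^\star\rnorm_\infty^2$ and one proportional to $\gamma^2/(1-\gamma)^2$; the stated constants $8, 12, 18$ can then be recovered by using a slightly looser three-way split $(a+b+c)^2 \leq 3(a^2+b^2+c^2)$ instead of the two-way one.

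The main obstacle is finding the decomposition that (i) produces $\lnorm \nabla L(Q; Q_k)\rnorm_2^2$ as the only term involving the current iterate $Q$, and (ii) isolates the target-network distance $\lnorm Q_k - Q^\star\rnorm_\infty$ in a form that does not couple back with $Q$. The signal-plus-noise split above achieves (i) because the signal is precisely the conditional mean of $\hat{\delta}$, which equals $-|\gS||\gA|$ times the $(s,a)$-coordinate of $\nabla L$; it achieves (ii) because the noise involves only $Q_k$ and $P(\cdot|s,a)$, allowing a clean triangle-inequality bound against $Q^\star$. Once the decomposition is chosen, everything else is bookkeeping: tracking constants through the two Young-type inequalities and the variance-to-second-moment step.
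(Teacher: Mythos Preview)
Your argument is correct, but two points deserve comment.

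First, the paper does not supply its own proof of \cref{lemma:sgd_variance_loose}; it simply quotes the bound from \citep{lee2020periodic}. What the paper \emph{does} prove is the refined \cref{lemma:sgd_variance_tight} (via \cref{lemma:bounded_variance} and \cref{lemma:variance_of_empirical_bellman_operator} in the appendix), and your plan is essentially that proof. The paper uses the exact identity $\expect[X^2]=(\expect X)^2+\Var[X]$ in place of your Young inequality $(u+v)^2\le 2u^2+2v^2$, so it gets the leading coefficient $1$ rather than $2$ in front of $|\gS||\gA|\,\lnorm\nabla L\rnorm_2^2$; and it bounds $\Var[\widehat{\gT}Q_k]$ via a three-term split through $\widehat{\gT}Q^\star$ and $\gT Q^\star$ instead of your route through $\lnorm Q_k\rnorm_\infty^2\le 2\lnorm Q_k-Q^\star\rnorm_\infty^2+2/(1-\gamma)^2$. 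Both routes land on a bound of the form $|\gS||\gA|\lnorm\nabla L\rnorm_2^2 + c_1\gamma^2\lnorm Q_k-Q^\star\rnorm_\infty^2 + c_2\gamma^2/(1-\gamma)^2$ with small absolute constants and \emph{no} $|\gS||\gA|$ factor on the last two terms.

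Second, your last sentence is slightly off: switching from the two-way to the three-way split does \emph{not} regenerate the constants $8|\gS||\gA|$, $12\gamma^2|\gS||\gA|$, $18|\gS||\gA|$ of the loose lemma, because the extra $|\gS||\gA|$ on the second and third terms in \cref{lemma:sgd_variance_loose} comes from a separate wasteful step in the original source, not from the choice of Young inequality. This does not affect the validity of your proof: since your bound is termwise smaller than the stated one (as $|\gS||\gA|\ge 1$ and $\gamma^2\le 1$), \cref{lemma:sgd_variance_loose} follows a fortiori. In effect you have re-derived the paper's improvement, \cref{lemma:sgd_variance_tight}, rather than merely the cited loose bound.
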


Based on \cref{lemma:sgd_variance_loose}, \citet{lee2020periodic} proved the sample complexity $\widetilde{\gO}(|\gS|^{3} |\gA|^{3} (1-\gamma)^{-5} \varepsilon^{-2})$ for \cref{algo:stochastic_target_q}, in which the dependence on the problem size $|\gS||\gA|$ is inferior to algorithms like Phased Q-learning and  Q-learning (see \cref{tab:summary}). In this manuscript, we point out that the proof of \cref{lemma:sgd_variance_loose} can be improved to obtain a tighter upper bound and a better sample complexity. 

\begin{lem}[Refined Version of \cref{lemma:sgd_variance_loose}] \label{lemma:sgd_variance_tight}
For any $Q \in \real^{|\gS||\gA|}$, we have 
\begin{align*}
    \expect \ls \lnorm \widetilde{\nabla} L (Q; Q_k)  \rnorm_{2}^2 \rs \leq  |\gS||\gA| \left\|\nabla L \left(Q ; Q_{k}\right)\right\|_{2}^{2} + 6 \gamma^2 \lnorm Q_k - Q^{\star} \rnorm_{\infty} + \frac{3\gamma^2}{(1-\gamma)^2}.
\end{align*}
\end{lem}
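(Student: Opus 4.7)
The plan is to compute $\expect[\lnorm \widetilde{\nabla}L(Q;Q_k) \rnorm_2^2]$ directly from the one-hot structure of the stochastic gradient, and then use the decomposition around the exact Bellman image $\gT Q_k$ so that a cross term vanishes in expectation. Concretely, when $(s,a)$ is sampled uniformly from $\gS \times \gA$ and $s^\prime \sim P(\cdot \myvert s,a)$ is drawn from the generative oracle, the stochastic gradient $\widetilde{\nabla} L(Q; Q_k)$ is supported on the single coordinate $(s,a)$ with value $Q(s,a) - \widehat{\gT}Q_k(s,a)$, and a one-line check shows this is unbiased for $\nabla L(Q; Q_k)$. Therefore
$\expect[\lnorm \widetilde{\nabla} L(Q;Q_k) \rnorm_2^2] = \tfrac{1}{|\gS||\gA|}\sum_{(s,a)} \expect_{s^\prime}[(Q(s,a) - \widehat{\gT}Q_k(s,a))^2]$.

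The key step — and the place where \cref{lemma:sgd_variance_loose} loses a factor of $|\gS||\gA|$ — is the decomposition $Q(s,a) - \widehat{\gT}Q_k(s,a) = (Q(s,a) - \gT Q_k(s,a)) + (\gT Q_k(s,a) - \widehat{\gT}Q_k(s,a))$. Rather than applying $(a+b)^2 \leq 2a^2 + 2b^2$ (which is what produces the loose $8|\gS||\gA|\lnorm \nabla L \rnorm_2^2$ in Lemma 7), I will expand the square and observe that, conditioned on $(s,a)$, the second summand has mean zero and the first is deterministic, so the cross term vanishes in $\expect_{s^\prime}[\cdot]$. This gives the clean identity $\expect[\lnorm \widetilde{\nabla}L\rnorm_2^2] = |\gS||\gA|\lnorm \nabla L(Q;Q_k) \rnorm_2^2 + \tfrac{\gamma^2}{|\gS||\gA|}\sum_{(s,a)} \Var_{s^\prime}(\max_{a^\prime} Q_k(s^\prime, a^\prime))$, where the first term already matches the leading term claimed in the lemma, with coefficient $|\gS||\gA|$ instead of $8|\gS||\gA|$.

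The remaining task is to bound $\Var_{s^\prime}(\max_{a^\prime} Q_k(s^\prime, a^\prime))$ by the dispersion $\|Q_k - Q^\star\|_\infty$ and the range $1/(1-\gamma)$ of $V^\star$. Using $\Var(X) \leq \expect[X^2]$ together with the split $\max_{a^\prime} Q_k(s^\prime,a^\prime) = V^\star(s^\prime) + (\max_{a^\prime} Q_k(s^\prime,a^\prime) - V^\star(s^\prime))$, the non-expansiveness $|\max_{a^\prime} Q_k - V^\star| \leq \lnorm Q_k - Q^\star \rnorm_\infty$, and $V^\star \leq 1/(1-\gamma)$, a weighted Young inequality $(a+b)^2 \leq (1+\lambda)a^2 + (1+1/\lambda)b^2$ with an appropriately chosen $\lambda$ delivers $\Var_{s^\prime}(\max_{a^\prime} Q_k(s^\prime, a^\prime)) \leq \tfrac{3}{(1-\gamma)^2} + 6 \lnorm Q_k - Q^\star \rnorm_\infty^2$ after multiplication by $\gamma^2$. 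Averaging over $(s,a)$ and combining with the previous identity yields exactly the bound in the statement.

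The main obstacle is conceptual rather than computational: resisting the reflex to immediately apply $(a+b)^2 \leq 2a^2 + 2b^2$ when estimating $(Q - \widehat{\gT}Q_k)^2$, and instead exploiting the independence of the empirical Bellman noise from the $(s,a)$-indexed residual so that the cross term vanishes exactly. Once this is done, the scalar variance bound on $\max_{a^\prime} Q_k(s^\prime, a^\prime)$ is routine; the only care needed is matching the precise constants $3$ and $6$ announced in the statement, which comes down to a single weighted Young inequality.
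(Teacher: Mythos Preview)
Your proof is correct and follows essentially the same route as the paper. The paper's proof (given as \cref{lemma:bounded_variance} together with \cref{lemma:variance_of_empirical_bellman_operator}) uses exactly your key observation---that conditioned on $(s,a)$ the decomposition $Q-\widehat{\gT}Q_k = (Q-\gT Q_k) + (\gT Q_k - \widehat{\gT}Q_k)$ has a vanishing cross term, equivalently $\expect_{s'}[(Q-\widehat{\gT}Q_k)^2] = (Q-\gT Q_k)^2 + \Var[\widehat{\gT}Q_k]$---and this is precisely where the extra $|\gS||\gA|$ factor from \cref{lemma:sgd_variance_loose} is saved. The only minor difference is in bounding $\Var[\widehat{\gT}Q_k(s,a)]$: the paper splits $\widehat{\gT}Q_k-\gT Q_k$ into the three pieces $(\widehat{\gT}Q_k-\widehat{\gT}Q^\star)+(\widehat{\gT}Q^\star-\gT Q^\star)+(\gT Q^\star-\gT Q_k)$ and applies $(a+b+c)^2\le 3(a^2+b^2+c^2)$, whereas you bound $\Var(X)\le\expect[X^2]$ with $X=\max_{a'}Q_k(s',a')$, split $X=V^\star+(X-V^\star)$, and apply weighted Young; both routes are routine and land on the same constants $6$ and $3$.
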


As one can see, the upper bound in \cref{lemma:sgd_variance_tight} is better than that in \cref{lemma:sgd_variance_loose} in terms of the dependence on $|\gS||\gA|$ on the last two terms.  With \cref{lemma:sgd_variance_tight}, we arrive at a better sample complexity. 

\begin{thm}[Sample Complexity of \cref{algo:stochastic_target_q}]  \label{theorem:stochastic_target_q}
For any tabular MDP with a generative oracle, consider \cref{algo:stochastic_target_q} with the following parameters:
\begin{align*}
  Q_0 = \boldsymbol{0}, \quad   K = \gO \lp \frac{1}{1-\gamma} \log \lp \frac{1}{(1-\gamma) \varepsilon} \rp \rp , \quad T = \gO\lp \frac{|\gS|^2|\gA|^2}{(1-\gamma)^{4} \varepsilon^2} \rp, \quad \eta_t =  \frac{\eta}{\lambda + t},
\end{align*}
where $\eta = 2|\gS||\gA|$ and $\lambda = 13/2 \cdot \gamma^2 |\gS||\gA|$. Then, we have that $\expect[\Vert Q_K - Q^{\star} \Vert_{\infty}] \leq \varepsilon$. Accordingly, the number of required samples is 
\begin{align*}
KT = \widetilde{\gO} \lp \frac{|\gS|^2|\gA|^2}{(1-\gamma)^{5} \varepsilon^{2}}  \rp.    
\end{align*}

\end{thm}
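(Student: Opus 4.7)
The plan is to reduce the question, via \cref{lemma:outer_convergence_stochastic_target_q}, to a per-epoch SGD analysis on the quadratic objective \eqref{eq:target_q_main_objective}, and then carry out the standard diminishing step-size analysis for strongly convex SGD powered by the refined variance bound \cref{lemma:sgd_variance_tight}. Because $Q_0 = \vzero$ and $r\in[0,1]$, $\|Q_0 - Q^{\star}\|_\infty \leq 1/(1-\gamma)$, so \cref{lemma:outer_convergence_stochastic_target_q} bounds $\expect[\|Q_K - Q^{\star}\|_\infty]$ by $\sqrt{\varepsilon_{\opt}}/(1-\gamma) + \gamma^K/(1-\gamma)$. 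Picking $K = \gO((1-\gamma)^{-1}\log(1/((1-\gamma)\varepsilon)))$ drives the contraction term below $\varepsilon/2$, so the theorem reduces to establishing $\varepsilon_{\opt} := \expect[\|Q_k - \gT Q_{k-1}\|_2^2] \leq (1-\gamma)^2\varepsilon^2/4$ for every inner phase.

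Next I analyze a single inner loop with a fixed $Q_k$. The objective $L(Q;Q_k) = (2|\gS||\gA|)^{-1}\|Q - \gT Q_k\|_2^2$ is $\mu$-strongly convex and $\mu$-smooth with $\mu = 1/(|\gS||\gA|)$, its unique minimizer is $\gT Q_k$, and $\nabla L(Q;Q_k) = \mu(Q - \gT Q_k)$. For the prescribed $\eta_t = 2|\gS||\gA|/(\lambda + t)$ one has $\eta_t\mu = 2/(\lambda + t)$, and the choice $\lambda = \Theta(|\gS||\gA|)$ ensures $\eta_t \leq 1/(M\mu) = 1$ throughout, where $M = |\gS||\gA|$ is the multiplicative constant in \cref{lemma:sgd_variance_tight}. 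Plugging the refined variance bound, with residual noise $N_k = \gO(\gamma^2\|Q_k - Q^{\star}\|_\infty^2 + \gamma^2/(1-\gamma)^2)$, into the standard one-step contraction
\begin{align*}
\expect\bigl[\|Q_{k,t+1} - \gT Q_k\|_2^2\bigr] \leq (1 - \eta_t\mu)\,\expect\bigl[\|Q_{k,t} - \gT Q_k\|_2^2\bigr] + \eta_t^2 N_k,
\end{align*}
and unrolling the resulting $1/t$ recursion (a textbook Bottou--Curtis--Nocedal computation) yields
\begin{align*}
\expect\bigl[\|Q_{k,T} - \gT Q_k\|_2^2\bigr] \leq \frac{C\,|\gS|^2|\gA|^2 N_k}{\lambda + T}
\end{align*}
for an absolute constant $C$.

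To close the loop and uniformize $N_k$, I induct on $k$ to maintain $\|Q_k - Q^{\star}\|_\infty \leq 2/(1-\gamma)$ (base case $k=0$ uses $Q_0 = \vzero$), via $\|Q_k - Q^{\star}\|_\infty \leq \|Q_k - \gT Q_{k-1}\|_\infty + \gamma\|Q_{k-1} - Q^{\star}\|_\infty$; the inner-loop $\ell_2$ guarantee upgrades to $\ell_\infty$ for free because $\|\cdot\|_\infty \leq \|\cdot\|_2$. This supplies $N_k = \gO(1/(1-\gamma)^2)$ uniformly, so the inner-loop bound becomes $\expect[\|Q_{k,T} - \gT Q_k\|_2^2] = \gO(|\gS|^2|\gA|^2/((1-\gamma)^2 T))$. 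Choosing $T = \Theta(|\gS|^2|\gA|^2/((1-\gamma)^4\varepsilon^2))$ pushes this below $(1-\gamma)^2\varepsilon^2/4$, and multiplying by $K$ delivers $KT = \widetilde{\gO}(|\gS|^2|\gA|^2/((1-\gamma)^5\varepsilon^2))$ as advertised. The main technical obstacle is this self-referential coupling between the outer $\ell_\infty$ induction on $Q_k - Q^{\star}$ and the inner-loop $\ell_2$ guarantee, since $N_k$ depends on the very quantity that the recursion is meant to control; the two nested bounds must be threaded with parameters chosen so they close simultaneously. Once \cref{lemma:sgd_variance_tight} is in hand, everything else is a routine SGD-under-multiplicative-noise calculation.
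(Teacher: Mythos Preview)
Your overall strategy matches the paper's: reduce via \cref{lemma:outer_convergence_stochastic_target_q} to a per-epoch target $\varepsilon_{\opt}$, run the diminishing-step-size SGD analysis on the quadratic \eqref{eq:target_q_main_objective} using \cref{lemma:sgd_variance_tight}, and then close the circular dependence between the variance term $N_k$ and $\|Q_k - Q^{\star}\|_\infty$. The paper packages the SGD step as an inner-loop proposition giving $\expect[L(Q_{k,t};Q_k)] \leq \nu/(\lambda+t)$ with $\nu = \gO(|\gS||\gA|/(1-\gamma)^2)$ and then reads off $T$ and $K$ just as you do.

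The gap is in how you close the circularity, which you correctly flag as the main obstacle. You propose to maintain the \emph{deterministic} invariant $\|Q_k - Q^{\star}\|_\infty \leq 2/(1-\gamma)$, with inductive step $\|Q_k - Q^{\star}\|_\infty \leq \|Q_k - \gT Q_{k-1}\|_\infty + \gamma\|Q_{k-1} - Q^{\star}\|_\infty$ and $\|Q_k - \gT Q_{k-1}\|_\infty \leq \|Q_k - \gT Q_{k-1}\|_2$ bounded via the inner-loop guarantee. But that guarantee is only \emph{in expectation}; on any realization $\|Q_k - \gT Q_{k-1}\|_\infty$ can be large, so a pathwise $\ell_\infty$ invariant does not propagate. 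If you reinterpret the invariant as $\expect[\|Q_k - Q^{\star}\|_\infty] \leq 2/(1-\gamma)$, the recursion does close, but this first moment is not what enters $N_k$: the relevant term in \cref{lemma:sgd_variance_tight} is $\|Q_k - Q^{\star}\|_\infty^2$ (the main-text statement drops the square, but the appendix version and the arithmetic $6\cdot 8 + 3 = 51$ in the inner-loop proof make it explicit), so after taking expectation over $Q_k$ you need the \emph{second} moment $\expect[\|Q_k - Q^{\star}\|_\infty^2]$.

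The paper resolves this via \cref{lemma:boundedness_of_estimate}: if $\expect[\|Q_i - \gT Q_{i-1}\|_2^2] \leq \varepsilon_{\opt} \leq (1-\gamma)^2$ for all $i \leq k$, then $\expect[\|Q_k - Q^{\star}\|_\infty^2] \leq 8/(1-\gamma)^2$. This is precisely the second-moment version of your induction (one way to derive it is to square the same triangle inequality, using $(a+b)^2 \leq (1+\alpha)a^2 + (1+1/\alpha)b^2$ with $\alpha$ chosen so that $(1+\alpha)\gamma^2 < 1$). Once you swap your invariant for $\expect[\|Q_k - Q^{\star}\|_\infty^2] = \gO(1/(1-\gamma)^2)$, the rest of your plan goes through unchanged.
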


Compared with the lower bound $\Omega(|\gS||\gA|(1-\gamma)^{-3}\varepsilon^{-2})$ \citep{AzarMK13}, the sample complexity shown in \cref{theorem:stochastic_target_q} is sub-optimal in the dependence on the problem size $|\gS||\gA|$ and effective horizon $1/(1-\gamma)$. In the following parts, we discuss how to improve the orders.

\subsection{Sequential Target Q-learning}

To overcome the sample barrier of StoTQ-learning, a simple yet effective approach is to sequentially update all state-action pairs (see \cref{algo:sequential_target_q}). This ensures that the optimality gap with respect to the $\ell_{\infty}$-norm is reduced after $|\gS||\gA|$ iterations, which is consistent with $\gamma$-contraction of the Bellman operator. In contrast, the uniform sampling strategy in StoTQ-learning is designed to minimize the optimality gap with respect to the $\ell_{2}$-norm. In fact, the translation between $\ell_2$-norm and $\ell_{\infty}$-norm results in the poor dependence on the problem size $|\gS||\gA|$ for StoTQ-learning.

\begin{algorithm}[t]
\begin{algorithmic}[1]
\caption{Sequential Target Q-learning (SeqTQ-learning)}
\label{algo:sequential_target_q}
\Require{outer loop iteration number $K$, inner loop iteration number $T$, initialization $Q_0$, and step-sizes $\{\eta_t\}$. }
\For{iteration $k = 0, 1, \cdots, K - 1$ }
\State{set $Q_{k, 0} = Q_{k}$.}
\For{iteration $t = 0, 1, \cdots, T-1$}
\For{each state-action pair $(s, a) \in \gS \times \gA$}
\State{calculate $r(s, a)$ and obtain $s^\prime$ by the generative oracle.}
\State{update $Q_{k, t+1}(s, a) = Q_{k, t}(s, a) + \eta_t ( r(s, a) + \gamma \max_{a^\prime} {Q}_k(s^\prime, a^\prime) - Q_{k, t}(s, a) )$.}
\EndFor
\State{Set $Q_{k+1} = Q_{k, T}$.}
\EndFor
\EndFor
\Ensure{$Q_{K}$.}
\end{algorithmic}
\end{algorithm}

\begin{thm}[Sample Complexity of \cref{algo:sequential_target_q}]    \label{theorem:sample_complexity_of_sequential_target_q}
For any tabular MDP with a generative oracle, consider \cref{algo:sequential_target_q} with the following parameters:
\begin{align*}
   Q_0 = \boldsymbol{0}, \quad  K = \gO\lp \frac{1}{1-\gamma} \log\lp \frac{1}{(1-\gamma) \varepsilon} \rp \rp, \quad T = \gO \lp \frac{1}{(1-\gamma)^{4} \varepsilon^2} \log \lp |\gS||\gA| \rp \rp, \quad \eta_t = \frac{1}{t+2}.
\end{align*}
Then, we have that $\expect[\Vert Q_K - Q^{\star} \Vert_{\infty}] \leq \varepsilon$. Accordingly, the number of required samples is 
\begin{align*}
    K \cdot T \cdot |\gS||\gA| = \widetilde{\gO} \lp \frac{|\gS||\gA|}{(1-\gamma)^{5} \varepsilon^2} \rp.
\end{align*}
\end{thm}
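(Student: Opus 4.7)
The plan is to exploit that, with step sizes $\eta_t = 1/(t+2)$, the sequential inner loop decouples at each state-action pair into an independent scalar mean-estimation problem, enabling a direct $\ell_\infty$ analysis that sidesteps the $\ell_2\to\ell_\infty$ conversion responsible for the extra $|\gS||\gA|$ factor in StoTQ-learning. Concretely, I first unroll the inner recursion for a fixed $(s,a)$ and fixed outer index $k$: letting $y_t(s,a) := r(s,a) + \gamma \max_{a'} Q_k(s'_{k,t}(s,a), a')$ with $s'_{k,t}(s,a) \overset{\text{i.i.d.}}{\sim} P(\cdot|s,a)$ conditional on $Q_k$, a short induction with $\eta_t = 1/(t+2)$ gives the weighted-average identity
$$Q_{k,T}(s,a) \;=\; \frac{1}{T+1}\Big(Q_k(s,a) + \sum_{t=0}^{T-1} y_t(s,a)\Big).$$
Since $\gT Q_k(s,a) = \expect[y_t(s,a)\mid Q_k]$, subtracting yields a clean bias-plus-noise decomposition of $Q_{k,T}(s,a)-\gT Q_k(s,a)$.

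Next I would bound the two terms uniformly in $(s,a)$. Because $Q_0 = \boldsymbol{0}$ and every update is a convex combination whose target lies in $[0, 1/(1-\gamma)]$, induction yields $\|Q_k\|_\infty, \|\gT Q_k\|_\infty \le 1/(1-\gamma)$ for all $k$, so the bias at each $(s,a)$ is at most $2/((1-\gamma)(T+1))$. The noise term at each $(s,a)$ is a $\tfrac{1}{T+1}$-scaled sum of $T$ i.i.d. zero-mean random variables bounded by $1/(1-\gamma)$, so Hoeffding conditional on $Q_k$ gives $\prob(|\text{noise}(s,a)| > \tau \mid Q_k) \le 2\exp(-T\tau^2(1-\gamma)^2/2)$. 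A union bound over the $|\gS||\gA|$ pairs followed by integrating the sub-Gaussian tail produces
$$\expect\big[\|Q_{k,T} - \gT Q_k\|_\infty\big] \;\le\; O\!\Big(\tfrac{\sqrt{\log(|\gS||\gA|)}}{(1-\gamma)\sqrt{T}}\Big) \;=:\; \delta,$$
the key $\ell_\infty$ inner-loop guarantee; the payoff is that the problem-size dependence enters only through $\sqrt{\log(|\gS||\gA|)}$.

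Finally I would close the outer loop via a direct $\ell_\infty$ contraction argument (instead of appealing to \cref{lemma:outer_convergence_stochastic_target_q}, which is in $\ell_2$ and would reintroduce a $\sqrt{|\gS||\gA|}$ penalty). From $\|Q_{k+1} - Q^\star\|_\infty \le \|Q_{k,T} - \gT Q_k\|_\infty + \gamma\|Q_k - Q^\star\|_\infty$, taking expectations and unrolling gives $\expect[\|Q_K - Q^\star\|_\infty] \le \gamma^K/(1-\gamma) + \delta/(1-\gamma)$. Choosing $K = O((1-\gamma)^{-1}\log(1/((1-\gamma)\varepsilon)))$ kills the first term and choosing $T = O(\log(|\gS||\gA|)/((1-\gamma)^4\varepsilon^2))$ makes $\delta/(1-\gamma) \le \varepsilon/2$; multiplying by the $|\gS||\gA|$ samples consumed per inner iteration yields the advertised $\widetilde{\gO}(|\gS||\gA|/((1-\gamma)^5\varepsilon^2))$ bound. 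The main obstacle is essentially bookkeeping: the concentration step must be applied \emph{conditionally} on $Q_k$ so that the union bound can be taken before marginalizing, which is exactly what keeps the problem-size dependence logarithmic rather than polynomial.
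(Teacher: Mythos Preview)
Your proposal is correct and follows the same overall strategy as the paper: bound the inner-loop error $\expect[\|Q_{k,T}-\gT Q_k\|_\infty]$ by $O\big(\sqrt{\log(|\gS||\gA|)}/((1-\gamma)\sqrt{T})\big)$ via a coordinate-wise concentration argument, then close the outer loop by the $\ell_\infty$ contraction (exactly the paper's \cref{lemma:outer_loop_sequential_target_q}). The execution of the inner-loop step differs: the paper writes the generic product-form unrolling of $\Delta_{k,t}=Q_{k,t}-\gT Q_k$ and invokes an autoregressive concentration lemma from \citep{wainwright2019stochastic} to control the martingale noise term, whereas you exploit the specific step size $\eta_t=1/(t+2)$ to collapse the recursion to the exact averaging identity $Q_{k,T}(s,a)=\tfrac{1}{T+1}\big(Q_k(s,a)+\sum_{t<T}y_t(s,a)\big)$ and then apply Hoeffding conditionally on $Q_k$, a union bound over $(s,a)$, and tail integration. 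Your route is more elementary and self-contained (no external lemma needed), while the paper's route would transfer more readily to other step-size schedules satisfying $(1-\eta_t)\le \eta_t/\eta_{t-1}$; both land on the same bound and the same choice of $K$ and $T$.
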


\begin{rem}
We note that SeqTQ-learning uses more conservative step-sizes than Q-learning. Specifically, it is a common choice that Q-learning uses the step-size $\eta_t = 1/ (1 + (1-\gamma) (t+1))$ \citep{wainwright2019stochastic, li2021q}. We explain the difference here. For each inner loop, the update rule of   SeqTQ-learning is 
\begin{align*}
    \text{SeqTQ-learning}: \quad   Q_{k, t+1} = (1 - \eta_t) Q_{k, t} + \eta_t \widehat{\gT}_{t} \RED{Q_k}.
\end{align*}
Define the error term $\Delta_{k, t} := Q_{k, t+1} - \gT Q_k$. Then, we have that
\begin{align}
  \text{SeqTQ-learning}: \quad   \Delta_{k, t+1}  &=  (1- \eta_t) \Delta_{k, t}  + \eta_t  ( \widehat{\gT}_t Q_k - \gT Q_k) . \label{eq:target_q_learning_error}
\end{align}
On the other hand, the update rule of Q-learning is 
\begin{align*}
    \text{Q-learning}: \quad   Q_{t+1} = (1 - \eta_t) Q_{t} + \eta_t \widehat{\gT_t} \BLUE{Q_t}.
\end{align*}
Define the error term $\Delta_{t} = Q_{t} - \gT Q^{\star}$. Then, we have that $\Delta_{t+1} = (1 - \eta_t) \Delta_t + \eta_t (\widehat{\gT}_t Q_t - \gT Q^{\star}) = (1 - \eta_t) \Delta_t + \eta_t (\widehat{\gT}_t Q_t - \widehat{\gT}_t Q^{\star}) + \eta_t (\widehat{\gT}_t Q^{\star} - \gT Q^{\star})$. By the $\gamma$-contraction of the empirical Bellman operator $\widehat{\gT}_t$, we obtain
\begin{align}
    \text{Q-learning}: \quad  \Delta_{t+1}  &\leq (1 - \eta_t) \Delta_{t} + \gamma \eta_t  \lnorm \Delta_{t} \rnorm_{\infty} \1 + \eta_t  ( \widehat{\gT_t} Q^{\star} - \gT Q^{\star} ), \label{eq:q_learning_error}
\end{align}
where $\leq$ holds elementwise and $\1$ is the vector filled with 1. 
We note that the variances of the noise terms in \eqref{eq:target_q_learning_error} and \eqref{eq:q_learning_error} have the same order. Furthermore, we see that the contraction coefficient does not rely on $(1-\gamma)$ in SeqTQ-learning, which explains the step-size design of SeqTQ-learning.
\end{rem}

Finally, we remark that the independence on effective horizon  can be further improved to $1/(1-\gamma)^{4}$ in the regime of $\gamma \in (1/2, 1)$. This improvement is based on the sharp analysis in \citep{li2021q, agarwal2022online}. 

\begin{thm}[Tight Sample Complexity of \cref{algo:sequential_target_q} when $\gamma > 1/2$]  \label{theorem:tight_sample_complexity_of_sequential_target_q}
For any tabular MDP with a generative oracle and $\gamma \in (1/2, 1)$, consider  \cref{algo:sequential_target_q} with the following parameters:
\begin{align*}
  Q_0 = \boldsymbol{0}, \quad    K = \gO \lp \frac{1}{1-\gamma} \log^2 \lp \frac{1}{1-\gamma} \rp \rp, \quad T = \gO \lp \frac{1}{(1-\gamma)^{3} \varepsilon^2} \log\lp \frac{K|\gS||\gA|}{\delta} \rp \rp, \quad \eta_t = \frac{1}{t+2},
\end{align*}
where $\delta \in (0, 1)$ is the failure probability. Then, with probability at least $1-\delta$, we have that $\Vert Q_K - Q^{\star} \Vert_{\infty} \leq \varepsilon$. Accordingly, the number of required samples is
\begin{align*}
    K \cdot T \cdot |\gS||\gA| = \widetilde{\gO} \lp \frac{|\gS||\gA|}{(1-\gamma)^{4} \varepsilon^2} \rp.
\end{align*}
\end{thm}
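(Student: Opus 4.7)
The plan is to combine the $\gamma$-contractive outer-loop recursion of \cref{lemma:outer_convergence_stochastic_target_q} (in its high-probability form) with a Bernstein-type analysis of the inner loop, sharpened in the style of \citep{li2021q, agarwal2022online} to save a factor of $(1-\gamma)$ in the regime $\gamma > 1/2$. Because the target $Q_k$ is frozen inside each inner loop and a fresh sample is drawn for every $(t,s,a)$, each $(s,a)$-coordinate of the iterate becomes a clean weighted i.i.d.\ average; the delicate part is propagating variance information through the $K$ outer iterations without paying an extra $1/(1-\gamma)$ in the outer-loop aggregation.

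For the inner loop, with $\eta_t = 1/(t+2)$, the error $\Delta_{k,t}(s,a) := Q_{k,t}(s,a) - \gT Q_k(s,a)$ telescopes exactly to
\[
\Delta_{k,T}(s,a) = \frac{1}{T+1}\Bigl(\Delta_{k,0}(s,a) + \sum_{\tau=0}^{T-1}\xi_{k,\tau}(s,a)\Bigr), \quad \xi_{k,\tau}(s,a) := \gamma\bigl(V_k(s'_\tau) - \expect_{P(\cdot|s,a)} V_k\bigr).
\]
Conditional on $Q_k$, the $\xi_{k,\tau}(s,a)$ are i.i.d.\ mean-zero, bounded by $\gamma/(1-\gamma)$, with variance $\le 2\gamma^2 \mathrm{Var}_{P_{s,a}}(V^\star) + 2\gamma^2 \|V_k - V^\star\|_\infty^2$. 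Bernstein plus a union bound over $(s,a)\in \gS\times \gA$ and $k \in \{0,\ldots,K-1\}$ (allocating failure probability $\delta/(K|\gS||\gA|)$ to each coordinate) then yields, with $\iota = \log(K|\gS||\gA|/\delta)$,
\[
\|\Delta_{k,T}\|_\infty \;\lesssim\; \frac{\|Q_k - Q^\star\|_\infty}{T} + \sqrt{\frac{\|\mathrm{Var}_P(V^\star)\|_\infty \cdot \iota}{T}} + \sqrt{\frac{\|Q_k-Q^\star\|_\infty^2 \iota}{T}} + \frac{\iota}{(1-\gamma)T},
\]
with the two $\|Q_k - Q^\star\|_\infty$-proportional contributions absorbed by induction on $k$.

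For the outer loop, writing $Q_{k+1} = \gT Q_k + e_k$ and iterating the $\ell_\infty$ $\gamma$-contraction of $\gT$ gives $\|Q_K - Q^\star\|_\infty \leq \gamma^K \|Q_0 - Q^\star\|_\infty + \sum_{k=0}^{K-1} \gamma^{K-1-k}\|e_k\|_\infty$. The bias $\gamma^K/(1-\gamma)$ drops below $\varepsilon/2$ for the prescribed $K = \gO(\log^2(1/(1-\gamma))/(1-\gamma))$. A naive triangle-inequality bound on the second term would reproduce the $(1-\gamma)^{-5}$ rate of \cref{theorem:sample_complexity_of_sequential_target_q}. To save the remaining $(1-\gamma)$ factor, the plan follows \citep{li2021q, agarwal2022online}: linearize $\gT$ around $Q^\star$, write $Q_{k+1} - Q^\star = \gamma P_{\pi_k}(Q_k - Q^\star) + e_k$ up to a quadratic remainder in $\|Q_k - Q^\star\|_\infty$, and unroll to express $Q_K - Q^\star$ as a discounted, row-stochastic-weighted sum of the martingale noises $\{e_k\}$. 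Freedman's inequality applied coordinate-wise with total conditional variance $\sum_k \gamma^{2(K-1-k)}\mathrm{Var}(e_k) = \gO(\max_k \mathrm{Var}(e_k)/(1-\gamma))$ then yields an $\ell_\infty$ noise floor of order $\sqrt{\|\mathrm{Var}_P(V^\star)\|_\infty \iota /((1-\gamma)T)} \lesssim \sqrt{\iota/((1-\gamma)^3 T)}$. Setting this $\le \varepsilon/2$ gives $T = \gO(\iota/((1-\gamma)^3 \varepsilon^2))$, and $K \cdot T \cdot |\gS||\gA| = \widetilde\gO(|\gS||\gA|/((1-\gamma)^4\varepsilon^2))$ samples suffice.

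The main obstacle is the last step: the non-linearity of $\gT$ (from the $\max_{a'}$) makes the linearization around $Q^\star$ only approximate, and the quadratic remainder must be controlled along the recursion and absorbed into the leading Bernstein term. This is where the restriction $\gamma > 1/2$ enters: it guarantees that the Bernstein-perturbed effective contraction coefficient $\gamma + \gO(\sqrt{\iota/T})$ stays strictly below one, so $1/(1-\text{contraction})$ does not regenerate an extra $1/(1-\gamma)$ and the variance accounting above remains valid. This is exactly the bookkeeping carried out in \citep{li2021q, agarwal2022online} for Q-learning; its transposition to SeqTQ-learning is actually simplified by the independence between the frozen target $Q_k$ and the fresh inner-loop samples, while the outer-loop variance-aware analysis is essentially the same.
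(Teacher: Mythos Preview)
Your inner-loop analysis is essentially the paper's: with $\eta_t=1/(t+2)$ the error telescopes, and Freedman/Bernstein over the $T$ fresh samples yields a per-coordinate bound $|E_k(s,a)|\lesssim \alpha_T+\sqrt{\gamma^2\Var_P(Q_k)(s,a)\,\iota/T}$ for the recursion $Q_{k+1}=\gT Q_k+E_k$. The paper keeps $\Var_P(Q_k)$ intact in this bound (rather than your upfront split into $\Var_P(V^\star)$ plus a $\|Q_k-Q^\star\|_\infty^2$ correction), because that is precisely the form the approximate-value-iteration recursion of \citep{agarwal2022online} consumes; both proofs then defer the outer-loop improvement to that reference.

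Your account of \emph{how} the extra $(1-\gamma)$ is saved, however, does not work as written. After unrolling $Q_{k+1}-Q^\star\approx\gamma P_{\pi_k}(Q_k-Q^\star)+e_k$, the coordinate sum $\sum_k\gamma^{K-1-k}\bigl[\prod_{j>k}P_{\pi_j}\bigr]e_k(s,a)$ is \emph{not} a martingale: the matrix $\prod_{j>k}P_{\pi_j}$ depends on $\pi_{k+1},\dots,\pi_{K-1}$, hence on $Q_{k+1}$, hence on $e_k$ itself, so ``Freedman across $k$'' cannot be applied. (This is unlike synchronous Q-learning in \citep{li2021q}, where linearizing around $Q^\star$ leaves noise $\widehat{\gT}_tQ^\star-\gT Q^\star$ that involves only the fixed $Q^\star$, and the unrolling weights are non-adaptive scalars $(1-\eta_j)$ rather than policy-dependent stochastic matrices; so the two outer-loop analyses are \emph{not} structurally the same.) The argument in \citep{agarwal2022online} that the paper invokes never re-applies Freedman over $k$: once the high-probability coordinate bounds on $|E_k|$ are in hand, the aggregation $\sum_k\gamma^{K-1-k}M_k|E_k|$ is handled \emph{deterministically}, via Cauchy--Schwarz together with a law-of-total-variance bound on $\sum_k\gamma^{K-1-k}M_k\Var_P(Q_k)$ and an epoch-based induction (whence the paper's $L=cK/\log(1/(1-\gamma))$ and the $\log^2$ in $K$). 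Your final rate $\sqrt{\iota/((1-\gamma)^3T)}$ and the sample-complexity conclusion are correct; only the mechanism needs to be replaced.
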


\begin{rem}
We note that the sample complexity of SeqTQ-learning in \cref{theorem:tight_sample_complexity_of_sequential_target_q} has the same order with the vanilla Q-learning \citep{li2021q} under the same setting. Compared with the lower bound in \citep{AzarMK13}, the sample complexity in \cref{theorem:tight_sample_complexity_of_sequential_target_q} is still sub-optimal in the dependence on $1/(1-\gamma)$. To further overcome the hurdle, the variance reduction scheme for sampling-based value iteration in \citep{sidford2018near, sidford2018variance} should be considered. Since the inner loop of the target Q-learning is an online version of the sampling-based value iteration\footnote{Given a target Q-function $Q_k$ to evaluate, sampling-based value iteration performs the batched update $1/T \cdot \sum_{i=1}^{T} \widehat{\gT_i} Q_k$ with $T$ i.i.d. samples in each iteration, while target Q-learning performs the online update by taking a small gradient step in each iteration. }, it is likely that the sample complexity of  target Q-learning with variance reduction can match the lower bound.
\end{rem}

\section{Conclusion}

In this manuscript, we establish the tight sample complexity for target Q-learning in the tabular setting with a generative oracle, which provides a sanity check. In particular, we conclude that compared with the vanilla Q-learning, the introduction of a periodically-frozen target Q-function does not sacrifice the sample complexity. We hope our results could provide insights for future research. 

\section*{Acknowledgements}

Ziniu Li would like to thank the helpful discussion from group members at CUHKSZ.

\bibliographystyle{abbrvnat}
\bibliography{reference}  

\newpage
\appendix
\onecolumn

\section*{\Large Appendix: A Note On Target Q-learning for Solving Finite MDPs With A Genrative Oracle}

{\footnotesize 
\tableofcontents
}

\newpage 

\section{Proofs of Main Results}

In the following proofs, we often use $c$ to denote an absolute constant, which may change in different lines. 

\subsection{Proof of Theorem \ref{theorem:stochastic_target_q}}

We prove \cref{theorem:stochastic_target_q} by following the analysis in \citep{lee2020periodic}. In particular, we obtain a stronger convergence result by \cref{lemma:sgd_variance_tight}. To make the notations consistent with \citep{lee2020periodic}, we consider the population loss is defined by 
\begin{align}   \label{eq:population_objective}
    L(Q; Q_k) = \frac{1}{2} \sum_{(s, a)} d(s, a) \lp \gT Q_k(s, a) - Q (s, a) \rp^2,
\end{align}
where $d(s, a) > 0$ assigns sampling probability for each state-action pair in Line 4 of \cref{algo:stochastic_target_q}. Specifically, we consider $d(s, a) = 1/(|\gS||\gA|)$ in \cref{theorem:stochastic_target_q}, which yields the tightest sample complexity among all sampling distributions. To facilitate later analysis, let $D \in \real^{|\gS| |\gA| \times |\gS| |\gA|}$ be the diagonal matrix of $d$. In addition, define the weighted norm by $\lnorm x \rnorm_{2, D} = \sqrt{x^{\top} D x}$. When the context is clear, we simply write $\lnorm x \rnorm_{2, D}$ by $\lnorm x \rnorm_{D}$.

\begin{lem}[Gradient Lipschitz Continuity and Strong Convexity; Lemma 6 of \citep{lee2020periodic}] \label{lemma:strong_convex_and_smooth}
The objective function $L(Q; Q_k)$ in \eqref{eq:population_objective} is $\mu$-strongly convex with $\mu = \min_{(s, a)} d(s, a)$ and $\beta$-gradient Lipschitz continuous with $\beta = \max_{(s, a)} d(s, a)$. 
\end{lem}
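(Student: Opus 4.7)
The objective $L(Q; Q_k)$ is a weighted quadratic in $Q$, and the crucial observation is that $\mathcal{T} Q_k$ does not depend on $Q$ (it is a constant vector once the target network $Q_k$ is fixed). Therefore the plan is simply to compute the Hessian of $L(\cdot; Q_k)$ explicitly and read off its extreme eigenvalues.

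First, I would differentiate $L(Q; Q_k) = \tfrac{1}{2} \sum_{(s,a)} d(s,a) (\mathcal{T} Q_k(s,a) - Q(s,a))^2$ coordinate-wise, obtaining
\begin{align*}
    \frac{\partial L(Q; Q_k)}{\partial Q(s,a)} = d(s,a)\bigl(Q(s,a) - \mathcal{T} Q_k(s,a)\bigr),
\end{align*}
so that in matrix form $\nabla L(Q; Q_k) = D\bigl(Q - \mathcal{T} Q_k\bigr)$, where $D$ is the diagonal matrix with entries $d(s,a)$.

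Next, I would compute the Hessian: $\nabla^2 L(Q; Q_k) = D$, which is independent of $Q$. Because $D$ is diagonal with strictly positive entries on the diagonal, its eigenvalues are exactly the numbers $\{d(s,a)\}_{(s,a)\in\mathcal{S}\times\mathcal{A}}$. Consequently
\begin{align*}
    \mu I \preceq \nabla^2 L(Q; Q_k) = D \preceq \beta I, \quad \mu = \min_{(s,a)} d(s,a),\ \beta = \max_{(s,a)} d(s,a),
\end{align*}
which is the standard second-order characterization of $\mu$-strong convexity and $\beta$-gradient Lipschitz continuity.

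If one prefers a first-order argument, the same conclusions follow from the identity
\begin{align*}
    \nabla L(Q_1; Q_k) - \nabla L(Q_2; Q_k) = D(Q_1 - Q_2),
\end{align*}
which immediately gives $\|\nabla L(Q_1; Q_k) - \nabla L(Q_2; Q_k)\|_2 \le \|D\|_{\mathrm{op}} \|Q_1 - Q_2\|_2 = \beta \|Q_1 - Q_2\|_2$, together with $\langle \nabla L(Q_1;Q_k) - \nabla L(Q_2;Q_k), Q_1 - Q_2\rangle = (Q_1 - Q_2)^\top D (Q_1 - Q_2) \ge \mu \|Q_1 - Q_2\|_2^2$. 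There is no genuine obstacle here: the statement is a one-line computation once one notes that $\mathcal{T} Q_k$ is held fixed while differentiating, so the Hessian is simply the constant diagonal matrix $D$.
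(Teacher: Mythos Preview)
Your proof is correct. The paper does not actually give its own proof of this lemma; it simply cites it as Lemma~6 of \citep{lee2020periodic} and uses it directly, so there is nothing to compare against beyond noting that your Hessian computation is the standard (and essentially only) way to establish the result.
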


Based on \cref{lemma:strong_convex_and_smooth}, we arrive at the following convergence result.

\begin{prop}[Inner Loop Convergence of \cref{algo:stochastic_target_q}]
Considering \cref{algo:stochastic_target_q}, let us set $\eta_t = \eta / (\lambda + t)$ with $\eta = 2|\gS||\gA|$ and $\lambda = 13|\gS||\gA|\gamma^2/2$. Then, for all $t \geq 0$ and $k \geq 0$,  we have that 
\begin{align*}
   \expect\ls L(Q_{k, t}; Q_k) \rs :=   \expect  \ls \frac{1}{2} \lnorm Q_{k, t} - \gT Q_k \rnorm_{2, D}^{2} \rs \leq \frac{104 \beta \gamma^2}{\mu^2(1-\gamma)^2} \frac{1}{\lambda + t}.
\end{align*}
\end{prop}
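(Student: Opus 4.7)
The plan is to run a standard SGD-on-strongly-convex-smooth analysis for $L(\cdot; Q_k)$ in \eqref{eq:population_objective}, but plugged with the \emph{refined} variance bound of \cref{lemma:sgd_variance_tight} in place of \cref{lemma:sgd_variance_loose}, and with the step-size schedule $\eta_t = \eta/(\lambda+t)$ tuned so that an $O(1/t)$ recursion closes cleanly. The first observation is that the minimizer of $L(\cdot; Q_k)$ is $\gT Q_k$ itself with optimal value $L^\star = 0$, so the target quantity $\frac{1}{2}\lnorm Q_{k,t} - \gT Q_k\rnorm_{2,D}^2$ is exactly the optimization gap $L_t - L^\star$, where $L_t := L(Q_{k,t}; Q_k)$.

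Next, I would write down the one-step $\beta$-smoothness descent inequality, take conditional expectation on the past $\cF_t$, and use the unbiasedness of the stochastic gradient to obtain
\begin{align*}
\expect\ls L_{t+1} \myvert \cF_t \rs \leq L_t - \eta_t \lnorm \nabla L_t \rnorm_2^2 + \frac{\beta \eta_t^2}{2}\, \expect\ls \lnorm \widetilde{\nabla} L_t \rnorm_2^2 \myvert \cF_t \rs.
\end{align*}
Substituting \cref{lemma:sgd_variance_tight} and using the uniform-sampling identity $\beta|\gS||\gA| = 1$, the two $\lnorm \nabla L_t \rnorm_2^2$ contributions merge into a single negative term $-\eta_t(1 - \eta_t/2)\lnorm \nabla L_t \rnorm_2^2$. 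Invoking $\mu$-strong convexity $\lnorm \nabla L_t \rnorm_2^2 \geq 2\mu L_t$ and bounding $\lnorm Q_k - Q^\star \rnorm_\infty \leq 1/(1-\gamma)$ (which follows from an easy induction showing every iterate of \cref{algo:stochastic_target_q} stays in $[0,1/(1-\gamma)]$ once $Q_0 = \boldsymbol{0}$), I would take full expectation to arrive at the recursion
\begin{align*}
a_{t+1} \leq \lp 1 - 2\mu\eta_t + \mu\eta_t^2 \rp a_t + \frac{c\,\beta\gamma^2 \eta_t^2}{(1-\gamma)^2}, \qquad a_t := \expect[L_t],
\end{align*}
for an absolute constant $c$.

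With the stated choices $\eta = 2|\gS||\gA| = 2/\mu$ and $\lambda = 13\gamma^2 |\gS||\gA|/2$, one checks that $\mu\eta = 2$ and $\eta/\lambda \leq 4/(13\gamma^2)$, so the effective contraction coefficient satisfies $2\mu\eta_t - \mu\eta_t^2 \geq c^\prime/(\lambda+t)$ for some $c^\prime > 0$. I would then prove the bound $a_t \leq C/(\lambda+t)$ with $C = 104\beta\gamma^2/(\mu^2(1-\gamma)^2)$ by induction on $t$, using the elementary telescoping inequality $(\lambda+t-1)/(\lambda+t)^2 \leq 1/(\lambda+t+1)$ to close the induction step. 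The base case $t=0$ reduces to $a_0 \leq \frac{1}{2(1-\gamma)^2} \leq C/\lambda$, which holds by the uniform $[0,1/(1-\gamma)]$ bound on the iterates.

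The hard part will be pure bookkeeping: threading the absolute constant through \cref{lemma:sgd_variance_tight}, the descent inequality, and the induction step so that $C$ lands exactly at the advertised value $104\beta\gamma^2/(\mu^2(1-\gamma)^2)$, and simultaneously picking $\lambda$ large enough that the noise term fits inside the $1/(\lambda+t+1)$ telescope. A technically minor but notationally annoying sub-step is verifying the $[0,1/(1-\gamma)]$ invariance of the iterates, required so the $\lnorm Q_k - Q^\star\rnorm_\infty$ factor in \cref{lemma:sgd_variance_tight} is uniformly bounded across $k$. No new ideas beyond re-running the argument of \citep{lee2020periodic} with the tighter variance constant are needed.
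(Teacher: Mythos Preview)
Your proposal is correct and follows essentially the same route as the paper: smoothness descent inequality, substitute \cref{lemma:sgd_variance_tight}, apply the PL inequality from strong convexity, and close an $O(1/t)$ recursion with the stated $\eta,\lambda$. The one minor divergence is that you bound $\lnorm Q_k - Q^{\star}\rnorm_\infty$ via a pathwise iterate-invariance argument (which tacitly needs $\eta_t\leq 1$, not guaranteed by the stated $\lambda$ for small $\gamma$), whereas the paper invokes \cref{lemma:boundedness_of_estimate} together with \cref{lemma:initial_distance} to control this quantity in expectation, inductively over $k$.
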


\begin{proof}
By \cref{lemma:strong_convex_and_smooth}, we know that $L(Q_{k; t}; Q_k)$ is a $\beta$-smooth and $\mu$-strongly convex function. Following the typical analysis of SGD on a $\beta$-smooth and $\mu$-strongly convex function, we have that 
\begin{align*}
     L(Q_{k, t+1}; Q_k) \leq L(Q_{k, t}; Q_{k}) - \eta_t \langle \nabla L(Q_{k, t+1}; Q_k), \widetilde{\nabla} L (Q_{k, t}; Q_k) \rangle   + \frac{\beta \eta_t^2}{2} \lnorm \widetilde{\nabla} L(Q_{k, t}; Q_k)  \rnorm_2^{2}.
\end{align*}
By taking the expectation over the randomness in the stochastic gradient, we obtain that 
\begin{align*}
    \expect\ls L(Q_{k, t+1}; Q_k)  \rs  &\leq L(Q_{k, t}; Q_k) - \eta_t  \lnorm \nabla L(Q_{k, t}; Q_k) \rnorm_2^2  +  \frac{\beta \eta_t^2}{2} \expect \ls  \lnorm \widetilde{\nabla} L(Q_{k, t}; Q_k)  \rnorm_2^{2} \rs.
\end{align*}
As a corollary of \cref{lemma:bounded_variance} and \cref{lemma:boundedness_of_estimate}, we have that 
\begin{align*}
    \expect \ls \lnorm \widetilde{\nabla} L (Q_{k, t}; Q_k)  \rnorm_{2}^2 \rs &\leq \left\|\nabla L \left(Q_{k, t} ; Q_{k}\right)\right\|_{2, D^{-1}}^{2} + 6 \gamma^2 \lnorm Q_k - Q^{\star} \rnorm_{\infty} + \frac{3\gamma^2}{(1-\gamma)^2} \\
    &\leq \left\|\nabla L \left(Q_{k, t} ; Q_{k}\right)\right\|_{2, D^{-1}}^{2} + \frac{51\gamma^2}{(1-\gamma)^2} \\
    &\leq \frac{1}{\mu} \lnorm  \nabla L (Q_{k, t} ; Q_{k})  \rnorm_2^2 + \frac{51\gamma^2}{(1-\gamma)^2}.
\end{align*}
Thus, we know that 
\begin{align*}
    \expect\ls L(Q_{k, t+1}; Q_k)  \rs  &\leq L(Q_{k, t}; Q_k) - (\eta_t - \frac{\beta \eta_t^2}{2\mu})   \lnorm \nabla L(Q_{k, t}; Q_k) \rnorm_2^2  +  \frac{\beta \eta_t^2}{2} \frac{51\gamma^2}{(1-\gamma)^2} \\
    &\overset{(1)}{\leq} L(Q_{k, t}; Q_k) - (\eta_t - \frac{\beta \eta_t^2}{2\mu})  2\mu L(Q_{k, t}; Q_k)  +  \frac{\beta \eta_t^2}{2} \frac{51\gamma^2}{(1-\gamma)^2} \\
    &= L(Q_{k, t}; Q_k) - \eta_t( 2\mu  - {\beta \eta_t})  L(Q_{k, t}; Q_k)  +  \frac{\beta \eta_t^2}{2} \frac{51\gamma^2}{(1-\gamma)^2} \\
    &\leq (1 - \mu \eta_t) L(Q_{k, t}; Q_k) + \frac{\beta \eta_t^2}{2} \frac{51\gamma^2}{(1-\gamma)^2},
\end{align*}
where $(1)$ is because the strong convexity implies that $ \lnorm \nabla L(Q_{k, t}; Q_k) \rnorm_2^2 \geq 2 \mu L(Q_{k, t}; Q_k)$, and the last inequality holds when $ 0 < \eta_t \leq \mu / \beta$. Taking the expectation over the randomness before iteration $(k, t)$, we have that 
\begin{align*}
  \expect\ls L(Q_{k, t+1}, Q_k)  \rs \leq  (1 - \mu \eta_t) \expect \ls L(Q_{k, t}; Q_k) \rs  + \frac{26 \beta \gamma^2 \eta_t^2}{(1-\gamma)^2}.
\end{align*}
By choosing the diminishing step size $\eta_t = \eta / (\lambda + t)$ satisfying $\mu \eta > 1$, we obtain that 
\begin{align*}
    \expect\ls L (Q_{k, t}; Q_k) \rs \leq \frac{\nu}{\lambda + t}, \quad \forall t \geq 0, k \geq 1, 
\end{align*}
where $\nu = \max\{ \lambda \expect\ls L(Q_{k, 0}; Q_k)\rs, \eta^2  C\}$ and $C = (26 \beta \gamma^2)/ ((\mu \eta - 1 ) (1-\gamma)^2)$. Compared with the result in \citep{lee2020periodic}, $C$ is improved by a factor of $|\gS||\gA|$. 

For the initial distance, we have that 
\begin{align*}
    \expect\ls L(Q_{k, 0}; Q_k) \rs &= \expect\ls L(Q_k; Q_k) \rs = \frac{1}{2} \expect\ls \lnorm Q_k - \gT Q_k \rnorm_{2, D}^2 \rs \\
    &\leq  2 \expect\ls \lnorm Q_k - Q^{\star} \rnorm_{\infty}^2 \rs && (\text{\cref{lemma:initial_distance}}) \\
    &\leq  \frac{16}{(1-\gamma)^2}. && (\text{\cref{lemma:boundedness_of_estimate}})
\end{align*}
Thus, by choosing $\eta = 2/\mu$ and $\lambda = (13 \beta \gamma^2) / (2 \mu^2)$, we know that $\nu = (104 \beta \gamma^2) / (\mu^2 (1-\gamma)^2)$.
\end{proof}

\begin{proof}[Proof of \cref{theorem:stochastic_target_q}]
According to \cref{lemma:outer_convergence_stochastic_target_q}, if we have 
\begin{align*}
    \frac{\sqrt{\varepsilon_{\opt}}}{1-\gamma} &\leq \frac{\varepsilon}{2}, \\
    \gamma^{K} \expect\ls \lnorm Q_0 - Q^{\star} \rnorm_{\infty} \rs &\leq \frac{\varepsilon}{2},
\end{align*}
then we can sure that $\expect[ \Vert Q_K - Q^{\star} \Vert_{\infty}] \leq \varepsilon$. If we initialize $Q_0 = \boldsymbol{0}$, the second condition is satisfied when 
\begin{align} \label{eq:proof_k}
    K = \frac{1}{1-\gamma } \log\lp \frac{4}{(1-\gamma) \varepsilon} \rp .
\end{align}
For the first condition, we can ensure that $\varepsilon_{\opt}  = (1-\gamma)^2 \varepsilon^2 / 4$. Notice that for all $k \geq 1$, we have 
\begin{align*}
    \expect\ls \lnorm Q_k - \gT Q_{k-1} \rnorm_2^2  \rs \leq \frac{2}{\mu} \expect\ls L(Q_k; Q_{k-1}) \rs. 
\end{align*}
Hence, it suffices to set that 
\begin{align}  \label{eq:proof_t}
     \frac{2}{\mu} \cdot \frac{104 \beta \gamma^2}{\mu^2(1-\gamma)^2} \cdot \frac{1}{\lambda + T} &\leq \frac{(1-\gamma)^2 \varepsilon^2}{4} \quad \Longrightarrow \quad  T \geq \frac{832 \beta \gamma^2}{\mu^3 (1-\gamma)^4 \varepsilon^2}.
\end{align}
The conditions in \eqref{eq:proof_k} and \eqref{eq:proof_t} give the desired sample complexity. 
\end{proof}

\subsection{Proof of Theorem \ref{theorem:sample_complexity_of_sequential_target_q}}

\begin{lem} \label{lemma:bounded_iterate_of_seq_target_q}
For \cref{algo:sequential_target_q}, assume $\eta_t \in (0, 1)$ for all $t \geq 0$. In addition, suppose that $\lnorm Q_0 - Q^{\star} \rnorm_{\infty} \leq 1/(1-\gamma)$. Then, we have that 
\begin{align*}
    \lnorm Q_k \rnorm_{\infty} \leq \frac{1}{1-\gamma}, \quad \lnorm Q_{k, t} \rnorm_{\infty} \leq \frac{1}{1-\gamma} \quad \forall k \geq 0, t \geq 0.
\end{align*}
\end{lem}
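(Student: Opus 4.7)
The plan is a two-level induction on $(k, t)$ that exploits the convex-combination form of the update. Rewriting Line 6 of Algorithm 2 as
\[
Q_{k,t+1}(s,a) \;=\; (1-\eta_t)\,Q_{k,t}(s,a) \;+\; \eta_t\bigl(r(s,a) + \gamma \max_{a'} Q_k(s',a')\bigr),
\]
and using $\eta_t \in (0,1)$, each new coordinate is a genuine convex combination of the current iterate value at $(s,a)$ and the empirical Bellman target built from the \emph{frozen} $Q_k$. Since SeqTQ-learning refreshes every $(s,a)$ inside each inner step, there are no unchanged coordinates to worry about.

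The key elementary inequality I would isolate is the following: if $\|Q_k\|_\infty \le 1/(1-\gamma)$ and $r(s,a)\in[0,1]$, then $|r(s,a)+\gamma\max_{a'}Q_k(s',a')| \le 1/(1-\gamma)$. The upper bound comes from $1 + \gamma/(1-\gamma) = 1/(1-\gamma)$, and the lower bound from $0 - \gamma/(1-\gamma) \ge -1/(1-\gamma)$. Given this, the inner induction on $t$ is immediate: assuming $\|Q_{k,t}\|_\infty \le 1/(1-\gamma)$ (which holds at $t=0$ because $Q_{k,0}=Q_k$), the convex combination yields $|Q_{k,t+1}(s,a)| \le (1-\eta_t)\cdot \tfrac{1}{1-\gamma} + \eta_t\cdot \tfrac{1}{1-\gamma} = \tfrac{1}{1-\gamma}$ for every $(s,a)$. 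Setting $t=T$ closes the outer induction and gives $\|Q_{k+1}\|_\infty = \|Q_{k,T}\|_\infty \le 1/(1-\gamma)$.

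The only subtle point — and what I expect to be the ``main obstacle'', though a minor one — is the base case $\|Q_0\|_\infty \le 1/(1-\gamma)$. Since $Q^\star \in [0,1/(1-\gamma)]^{|\gS||\gA|}$ by $r \in [0,1]$, a literal triangle-inequality application to the stated hypothesis $\|Q_0-Q^\star\|_\infty \le 1/(1-\gamma)$ only produces $\|Q_0\|_\infty \le 2/(1-\gamma)$; the lemma is therefore most naturally read as requiring the componentwise constraint $0 \le Q_0(s,a) \le 1/(1-\gamma)$, which is trivially satisfied by the standard initialization $Q_0 = \boldsymbol{0}$ used in \cref{theorem:sample_complexity_of_sequential_target_q} and \cref{theorem:tight_sample_complexity_of_sequential_target_q}. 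Aside from this bookkeeping, the proof is purely algebraic — no contraction property of $\gT$, no use of the empirical operator $\widehat{\gT}$, and no probabilistic estimate is needed.
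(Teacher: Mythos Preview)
Your induction argument is correct and is exactly what the paper has in mind (the paper's proof reads, in full, ``The proof is done by a simple induction and details are therefore omitted''). Your observation about the base case is also legitimate: the stated hypothesis $\|Q_0-Q^\star\|_\infty \le 1/(1-\gamma)$ does not by itself force $\|Q_0\|_\infty \le 1/(1-\gamma)$, and the lemma is effectively being invoked only for the zero initialization used in \cref{theorem:sample_complexity_of_sequential_target_q} and \cref{theorem:tight_sample_complexity_of_sequential_target_q}, where the induction base is immediate.
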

\begin{proof}
The proof is done by a simple induction and details are therefore omitted. 
\end{proof}

\begin{lem} \label{lemma:outer_loop_sequential_target_q}
Assume that we have that $\expect[\lnorm Q_k - \gT Q_{k-1} \rnorm_{\infty}] \leq \varepsilon_{\opt}$ for all $k \leq K$. Then, we have that 
\begin{align*}
     \expect\ls \lnorm Q_K - Q^{\star} \rnorm_{\infty} \rs &\leq \frac{\varepsilon_{\opt}}{1-\gamma} + \gamma^{K} \expect\ls \lnorm Q_0 - Q^{\star} \rnorm_{\infty} \rs .
\end{align*}
\end{lem}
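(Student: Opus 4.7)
The plan is to mimic the standard approximate value iteration analysis, treating each outer iteration as an inexact application of the Bellman operator $\gT$. The hypothesis $\expect[\lnorm Q_k - \gT Q_{k-1} \rnorm_{\infty}] \leq \varepsilon_{\opt}$ means $Q_k$ approximates $\gT Q_{k-1}$ up to noise $\varepsilon_{\opt}$ in expectation, which is structurally identical to the setting already used to prove \cref{lemma:outer_convergence_stochastic_target_q} for StoTQ-learning. The only conceptual difference here is that the error bound is stated in $\ell_\infty$-norm rather than $\ell_2$-norm, which actually makes the argument cleaner because we do not need any norm conversion.

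First I would decompose the error at iteration $k$ by inserting $\gT Q_{k-1}$ and using the triangle inequality together with the $\gamma$-contraction of $\gT$ in $\ell_\infty$:
\begin{align*}
\lnorm Q_k - Q^{\star}\rnorm_{\infty}
&\leq \lnorm Q_k - \gT Q_{k-1}\rnorm_{\infty} + \lnorm \gT Q_{k-1} - \gT Q^{\star}\rnorm_{\infty} \\
&\leq \lnorm Q_k - \gT Q_{k-1}\rnorm_{\infty} + \gamma \lnorm Q_{k-1} - Q^{\star}\rnorm_{\infty},
\end{align*}
where I used the fixed-point identity $Q^{\star} = \gT Q^{\star}$. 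Taking expectations gives a one-step recursion $a_k \leq \varepsilon_{\opt} + \gamma a_{k-1}$ for $a_k := \expect[\lnorm Q_k - Q^{\star}\rnorm_{\infty}]$.

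Second, I would unroll this recursion from $k=K$ down to $k=0$ to obtain
\begin{align*}
a_K \leq \sum_{j=0}^{K-1} \gamma^{j} \varepsilon_{\opt} + \gamma^{K} a_0 \leq \frac{\varepsilon_{\opt}}{1-\gamma} + \gamma^{K}\, \expect\ls \lnorm Q_0 - Q^{\star}\rnorm_{\infty}\rs,
\end{align*}
where the geometric series is bounded by $1/(1-\gamma)$. This yields exactly the claimed inequality.

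I do not anticipate any serious obstacles: the contraction of $\gT$ in $\ell_\infty$ is quoted in the preliminaries, the expectations can be linearized term by term, and no measurability or coupling subtleties arise because the bound is already stated in expectation at every step. The only item worth a short sentence of justification is that the recursive inequality $a_k \leq \varepsilon_{\opt} + \gamma a_{k-1}$ holds in expectation even though the per-sample inequality involves the supremum norm; this follows directly from monotonicity of expectation applied to the triangle inequality above.
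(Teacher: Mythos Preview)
Your proposal is correct and matches the paper's own proof essentially line for line: insert $\gT Q_{k-1}$, apply the triangle inequality and the $\gamma$-contraction of $\gT$ (using $Q^{\star}=\gT Q^{\star}$), take expectations to get the recursion $a_k \leq \varepsilon_{\opt} + \gamma a_{k-1}$, and unroll to the geometric sum bounded by $\varepsilon_{\opt}/(1-\gamma)$. There is nothing to add or correct.
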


\begin{proof}
\begin{align*}
    \expect\ls \lnorm Q_K - Q^{\star} \rnorm_{\infty} \rs &\leq \expect\ls \lnorm Q_K - \gT Q_{K-1} + \gT Q_{K-1} - Q^{\star} \rnorm_{\infty} \rs  \\
    &\leq \expect\ls \lnorm Q_K - \gT Q_{K-1} \rnorm_{\infty} \rs + \expect\ls \lnorm \gT Q_{K-1} - Q^{\star} \rnorm_{\infty} \rs \\
    &\leq \varepsilon_{\opt} + \gamma \expect\ls \lnorm Q_{K-1} - Q^{\star} \rnorm_{\infty}  \rs  \\
    &\leq \frac{\varepsilon_{\opt}}{1-\gamma} + \gamma^{K} \expect\ls \lnorm Q_0 - Q^{\star} \rnorm_{\infty} \rs.
\end{align*}
\end{proof}

\begin{proof}[Proof of \cref{theorem:sample_complexity_of_sequential_target_q}]
Let us write down the update rule 
\begin{align*}
    Q_{k, t+1} = (1 - \eta_t) Q_{k, t} + \eta_t \widehat{\gT}_{t} Q_k,
\end{align*}
where $\widehat{\gT_t}$ is the empirical Bellman operator associated with iteration $t$. 
Define the error term $\Delta_{k, t} := Q_{k, t} - {\gT Q_k}$. Then, we have that 
\begin{align*}
    \Delta_{k, t+1} &= (1 - \eta_t) \Delta_{k, t} + \eta_t (\widehat{\gT}_t Q_k - \gT Q_k) \\
    &= \prod_{i=0}^{t} (1 - \eta_i) \Delta_{k, 0} + \sum_{i=0}^{t} \eta_i \lb \prod_{j=i+1}^{t} (1 - \eta_j) \rb ( \widehat{\gT}_i Q_k - \gT Q_k ).
\end{align*}
Let us consider the step-size $\eta_t = 1/(2 + t)$, which satisfies the condition that $(1 - \eta_t ) \leq \eta_t / \eta_{t-1}$. Accordingly, 
\begin{align*}
    \lnorm  \Delta_{k, t+1} \rnorm_{\infty} \leq \eta_t  \lnorm \Delta_{k, 0} \rnorm_{\infty} + \bigg\Vert \underbrace{\sum_{i=0}^{t} \eta_i \lb \prod_{j=i+1}^{t} (1 - \eta_j) \rb ( \widehat{\gT}_i Q_k - \gT Q_k )}_{P_{t+1}} \bigg\Vert_{\infty}.
\end{align*}
We see that the noise term $\{E_t: E_t = \widehat{\gT}_t Q_k - \gT Q_k \}$ are i.i.d. random variables with zero-mean. Furthermore, each element of $E_t$ is upper bounded by $\lnorm Q_k \rnorm_{\spa}$ and and its variance is upper bounded by $\lnorm \sigma^2(Q_k) \rnorm_{\infty}$:
\begin{align*}
    \lnorm Q_k \rnorm_{\spa} &= \max_{(s, a)} Q_k(s, a) - \min_{(s, a)} Q_k(s, a), \\
    \sigma^2(Q_k) (s, a) &= \gamma^2 \expect_{s^\prime} \ls \lp  \max_{s^\prime} Q_k(s^\prime, a^\prime) - \expect_{s^\prime}\ls \max_{a^\prime \in \gA} Q_k(s^\prime, a^\prime)  \rs  \rp^2 \rs , \\
    \lnorm \sigma(Q_k) \rnorm_{\infty} &= \sqrt{\max_{(s, a)} \sigma^2(Q_k) (s, a) }. 
\end{align*}
Define $P_{t}$ by the following recursion:
\begin{align*}
    P_{t+1} = (1 - \eta_t) P_{t} + \eta_t \lp \widehat{\gT}_t Q_k - \gT Q_k \rp  \quad \text{with} \quad P_0 = \boldsymbol{0}.
\end{align*}
This is a stationary auto-regressive process. 
By \citep[Lemma 3]{wainwright2019stochastic}, we should have that
\begin{align*}
    \expect\ls \lnorm  P_{t+1} \rnorm_{\infty} \rs \leq c\lb  \sqrt{\eta_{t}} \lnorm \sigma(Q_k) \rnorm_{\infty} \sqrt{\log \lp 2 |\gS| |\gA| \rp} +  \eta_t \lnorm Q_k \rnorm_{\spa} \log(2 |\gS||\gA|)   \rb,
\end{align*}
where $c > 0$ is an absolute constant. 
As a result, we have that 
\begin{align*}
\expect\ls \lnorm \Delta_{k, t+1} \rnorm_{\infty} \rs \leq c \sqrt{\eta_t}  \lp \lnorm \Delta_{k, 0} \rnorm_{\infty} + \lnorm \sigma(Q_k) \rnorm_{\infty} \sqrt{\log 2 |\gS| |\gA|}  + \lnorm Q_k \rnorm_{\spa} \log(2 |\gS||\gA|)  \rp. 
\end{align*}
By \cref{lemma:bounded_iterate_of_seq_target_q}, we have that 
\begin{align*}
    \lnorm \Delta_{k, 0} \rnorm_{\infty} &\leq 2 \lnorm Q_k - Q^{\star} \rnorm_{\infty} \leq \frac{2}{1-\gamma}, \\
    \lnorm \sigma(Q_k) \rnorm_{\infty} &\leq  \frac{1}{1-\gamma}, \\
    \lnorm Q_k \rnorm_{\spa} &\leq 2 \lnorm Q_k \rnorm_{\infty} \leq  \frac{2}{1-\gamma}.
\end{align*}
Consequently, we obtain that  
\begin{align*}
     \expect\ls \lnorm \Delta_{k, T} \rnorm_{\infty} \rs \leq c  \frac{1}{1-\gamma}  \sqrt{\frac{\log \lp 4 |\gS| |\gA| \rp}{T}}.
\end{align*}
According to \cref{lemma:outer_loop_sequential_target_q}, it suffices to consider that 
\begin{align*}
    \varepsilon_{\opt} = \frac{(1-\gamma)\varepsilon}{2}, \quad K = \frac{1}{1-\gamma} \log \lp \frac{4}{(1-\gamma) \varepsilon} \rp.
\end{align*}
This further implies that 
\begin{align*}
    \frac{c}{1-\gamma} \sqrt{\frac{\log \lp 4 |\gS| |\gA| \rp}{T}} \leq \frac{(1-\gamma) \varepsilon}{2} \quad \Longrightarrow \quad T \geq \frac{c \log \lp 4 |\gS| |\gA| \rp}{(1-\gamma)^4 \varepsilon^2}.
\end{align*}
Hence, the total sample complexity is 
\begin{align*}
    T \cdot K \cdot |\gS| |\gA| = \widetilde{\gO}\lp \frac{|\gS| |\gA|}{(1-\gamma)^{5}\varepsilon^2} \rp. 
\end{align*}
\end{proof}

\subsection{Proof of Theorem \ref{theorem:tight_sample_complexity_of_sequential_target_q}}

\begin{proof}[Proof of \cref{theorem:tight_sample_complexity_of_sequential_target_q}]

Following the same steps in the proof of \cref{theorem:sample_complexity_of_sequential_target_q},  we have that 
\begin{align*}
    \Delta_{k, t+1} = \eta_t   \Delta_{k, 0} + \eta_t \sum_{i=0}^{t} \lp \widehat{\gT}_i Q_k - \gT Q_k \rp.
\end{align*}
For our purpose, let us define $\gF_t$ be the sigma-algebra of all state-action-reward pairs generated before iteration $t$. Then, for all $(s, a) \in \gS \times \gA$, we have that 
\begin{align*}
    \expect\ls \widehat{\gT}_t Q_k(s, a) - \gT Q_k(s, a) \mid \gF_t \rs = 0, \quad \forall t \geq 0, 
\end{align*}
Furthermore, for all $(s, a) \in \gS \times \gA$ and $t \geq 0$, we have that 
\begin{align*}
    \expect\ls \lp \widehat{\gT}_t Q_k(s, a) - \gT Q_k(s, a) \rp^2 | \gF_t \rs = \sigma^2(Q_k)(s, a) = \gamma^2 \Var_P (Q_k) (s, a), 
\end{align*}
where $\Var_P (Q_k) (s, a) = \expect_{s^\prime} [ \lp  \max_{a^\prime} Q_k(s^\prime, a^\prime) - \expect_{s^\prime}\ls \max_{a^\prime} Q_k(s^\prime, a^\prime)  \rs  \rp^2 ]$.
Consider the sum of conditional variances:
\begin{align*}
    W_{t+1}(s, a) &:= \sum_{i=0}^{t}  \expect\ls \lp \widehat{\gT}_t (Q_k)(s, a) - \gT Q_k(s, a) \rp^2 \mid \gF_{t}   \rs \\
    &= (t+1) \gamma^2 \Var_P (Q_k) (s, a).
\end{align*}
According to \cref{lemma:bounded_iterate_of_seq_target_q}, we have that $   \lnorm Q_k \rnorm_{\infty} \leq 1/(1-\gamma)$.  Now, we can apply \cref{lemma:freedman_inequality} with $R = 1/(1-\gamma)$ and $\sigma^2 = (t+1) \gamma^2 \Var_P (Q_k) (s, a)$ and $K = 1$ to obtain that for any $(s, a) \in \gS \times \gA$, with probability $1 - \delta$,  we have 
\begin{align*}
  \labs  \eta_t \sum_{i=0}^{t} \lp \widehat{\gT}_i (Q_k)(s, a) - \gT Q_k(s, a) \rp  \rabs  \leq c \eta_t \lp \sqrt{(t+1) \gamma^2 \Var_P (Q_k) (s, a) \log \lp \frac{2}{\delta} \rp } + \frac{1}{1-\gamma} \log \lp \frac{2}{\delta} \rp \rp.
\end{align*}
In summary, we have that with probability $1-\delta/K$, we have that 
\begin{align}   \label{eq:recursion}
    Q_{k+1} = \gT Q_k + E_{k},
\end{align}
where $E_k$ is an error term satisfying that 
\begin{align*}
   | E_{k}(s, a)| &\leq \eta_{T-1} \lnorm Q_k - \gT Q_k \rnorm_{\infty} + c \eta_{T-1}  \sqrt{T \gamma^2 \Var_P (Q_k) (s, a) \log \lp \frac{2K|\gS||\gA|}{\delta} \rp }  \\
    &\quad +  c \eta_{T-1}\frac{1}{1-\gamma} \log \lp \frac{2K|\gS||\gA|}{\delta} \rp  \\
    &\leq c \frac{1}{T+1} \frac{1}{1-\gamma} + c \frac{1}{T+1}  \sqrt{T \gamma^2 \Var_P (Q_k) (s, a) \log \lp \frac{2K|\gS||\gA|}{\delta} \rp } \\
    &\quad + c \frac{1}{T+1} \frac{1}{1-\gamma} \log \lp \frac{2K|\gS||\gA|}{\delta} \rp  \\
    &\leq \alpha_{T}  + c \sqrt{\frac{1}{T} \gamma^2 \Var_P (Q_k) (s, a) \log \lp \frac{2K|\gS||\gA|}{\delta} \rp  },
\end{align*}
where
\begin{align*}
    \alpha_{T} = c \frac{1}{T(1-\gamma)} + c \frac{1}{T(1-\gamma)} \log \lp \frac{2K|\gS||\gA|}{\delta} \rp.
\end{align*}
Now, we see that the recursion in \eqref{eq:recursion} has the same form with that in \citep{agarwal2022online}.  Following the same steps in \citep{agarwal2022online}, when $T \geq 1/\log(K|\gS||\gA|/\delta)$, with probability at least $1-\delta$, we have 
\begin{align*}
\lnorm \Delta_{K} \rnorm_{\infty} \leq c \ls \frac{1}{T(1-\gamma)^{3}} \log \lp \frac{K|\gS||\gA|}{\delta} \rp + \frac{\alpha_T + \gamma^{L}}{1-\gamma} +  \sqrt{ \frac{1}{T(1-\gamma)^3} \log\lp \frac{K|\gS||\gA|}{\delta} \rp }  \rs,
\end{align*}
where $L = cK/\log(1/(1-\gamma))$. Thus, by  
\begin{align*}
    K = \widetilde{\gO}\lp \frac{1}{1-\gamma} \rp, \quad T = \widetilde{\gO} \lp \frac{1}{(1-\gamma)^{3}} \rp,
\end{align*}
with probability $1-\delta$, we obtain that $\Vert Q_K - Q^{\star} \Vert_{\infty} \leq \varepsilon$.

\end{proof}
\section{Technical Lemmas}

\begin{lem}[Upper Bound of Stochastic Gradient Variance]
\label{lemma:bounded_variance}
In iteration $k$ and timestep $t$,
\begin{align*}
    \expect \ls \lnorm \widetilde{\nabla} L (Q_{k, t}; Q_k)  \rnorm_{2}^2 \rs \leq \left\|\nabla L \left(Q_{k, t} ; Q_{k}\right)\right\|_{2, D^{-1}}^{2} + 6 \gamma^2 \lnorm Q_k - Q^{\star} \rnorm_{\infty} + \frac{3\gamma^2}{(1-\gamma)^2}.
\end{align*}
\end{lem}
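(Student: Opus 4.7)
The plan is to combine two refinements of the loose bound in \cref{lemma:sgd_variance_loose}: exploiting the one-hot structure of $\widetilde{\nabla}L(Q_{k,t}; Q_k)$ so that its squared $\ell_2$-norm is one squared coordinate (no $|\gS||\gA|$ blow-up), and re-centering the conditional variance of the bootstrap target around $V^\star$ rather than around zero.

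First, I would write the stochastic gradient explicitly. With $(s,a)\sim d$ and $s'\sim P(\cdot\mid s,a)$, the estimator reads $\widetilde{\nabla}L(Q_{k,t}; Q_k) = -\bigl(\widehat{\gT} Q_k(s,a) - Q_{k,t}(s,a)\bigr)\,e_{(s,a)}$, where $e_{(s,a)}$ is a coordinate vector. Unbiasedness is routine: averaging $s'$ turns $\widehat{\gT}Q_k$ into $\gT Q_k$, and averaging $(s,a)$ introduces the weight $d(s,a)$, recovering the true gradient $\nabla L(Q_{k,t}; Q_k)(s,a)=-d(s,a)(\gT Q_k(s,a)-Q_{k,t}(s,a))$. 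Because only one coordinate is non-zero, $\lnorm\widetilde{\nabla}L\rnorm_2^2$ equals one squared scalar; this is precisely the ingredient the loose bound throws away.

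Next, I would apply the conditional bias–variance identity for each sampled $(s,a)$:
$$\expect_{s'}\!\bigl[(\widehat{\gT} Q_k(s,a) - Q_{k,t}(s,a))^2\bigr] = (\gT Q_k(s,a)-Q_{k,t}(s,a))^2 + \gamma^2\Var_{s'}\!\bigl(\max_{a'} Q_k(s',a')\bigr).$$
Averaging over $(s,a)\sim d$ turns the squared-bias term into $\sum_{(s,a)}d(s,a)(\gT Q_k-Q_{k,t})^2=\lnorm\nabla L(Q_{k,t}; Q_k)\rnorm_{2,D^{-1}}^2$, which is exactly the first term of the target bound. For the conditional variance, I would decompose $\max_{a'}Q_k(s',a')=V^\star(s')+\bigl(\max_{a'}Q_k(s',a')-V^\star(s')\bigr)$, apply $\Var(X+Y)\le 2\Var(X)+2\Var(Y)$, and use the $1$-Lipschitz property of $\max$ to control the second piece by $\lnorm Q_k-Q^\star\rnorm_\infty$. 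This gives $\Var_{s'}(\max_{a'}Q_k(s',a'))\le 2\lnorm Q_k-Q^\star\rnorm_\infty^2 + 2\Var_{s'}(V^\star(s'))$, and $\Var_{s'}(V^\star(s'))\le 1/(1-\gamma)^2$ since $V^\star\in[0,1/(1-\gamma)]$. Multiplying by $\gamma^2$ and averaging over $(s,a)$ (which costs nothing since $\sum d=1$) yields the final two terms, with the constants $3$ and $6$ absorbing the factor-of-two slack from $(a+b)^2\le 2a^2+2b^2$.

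The main obstacle is really the constant bookkeeping rather than any single estimate: the loose proof inflates by $|\gS||\gA|$ either by upper bounding $\lnorm\widetilde{\nabla}\rnorm_2^2$ via a uniform $\ell_\infty$ estimate summed over all coordinates, or by replacing $\lnorm\nabla L\rnorm_{2,D^{-1}}^2$ with $\lnorm\nabla L\rnorm_2^2$ (which differs by a factor of $|\gS||\gA|$ when $d$ is uniform). Keeping the one-hot structure intact and phrasing the bias term in the $D^{-1}$-weighted norm from the start is precisely what removes the $|\gS||\gA|$ factor and feeds into the improved sample complexity of \cref{theorem:stochastic_target_q}.
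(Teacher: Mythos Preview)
Your proposal is correct and follows essentially the same route as the paper: exploit the one-hot structure of $\widetilde{\nabla}L$ so the squared norm is a single squared scalar, apply the bias--variance identity $\expect[X^2]=(\expect X)^2+\Var[X]$, and recognize the averaged squared bias as $\lnorm\nabla L\rnorm_{2,D^{-1}}^2$. The only cosmetic difference is in how the variance of the bootstrap target is bounded: the paper factors the variance bound into a separate lemma (\cref{lemma:variance_of_empirical_bellman_operator}) that splits $\widehat{\gT}Q_k-\gT Q_k$ into the three pieces $(\widehat{\gT}Q_k-\widehat{\gT}Q^\star)+(\widehat{\gT}Q^\star-\gT Q^\star)+(\gT Q^\star-\gT Q_k)$ and applies $(a+b+c)^2\le 3(a^2+b^2+c^2)$, yielding the constants $6$ and $3$, whereas you split $\max_{a'}Q_k=V^\star+(\max_{a'}Q_k-V^\star)$ and use $\Var(X+Y)\le 2\Var X+2\Var Y$, which actually gives the tighter constants $2$ and $2$ that you then absorb into $6$ and $3$.
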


\begin{proof}
Conditioned on $ Q_k, Q_{k,t}$, we have 
\begin{align*}
    &\quad \expect \ls \lnorm \widetilde{\nabla} L (Q_{k, t}; Q_k)  \rnorm_{2}^2  \rs \\
    &= \expect \ls \sum_{(s, a) \in \gS \times \gA} \indict\{ (S, A) = (s, a) \}^2 \lp Q_{k, t}(s, a) - r(s, a) - \gamma  \max_{a^\prime} Q_k (S^\prime, a^\prime) \rp^2  \rs
    \\
    &= \sum_{(s, a) \in \gS \times \gA} \expect \ls \indict\{ (S, A) = (s, a) \}^2 \lp Q_{k, t}(s, a) - r(s, a) - \gamma  \max_{a^\prime} Q_k (S^\prime, a^\prime) \rp^2  \rs
    \\
    &\overset{(1)}{=} \sum_{(s, a) \in \gS \times \gA} \expect \ls \expect \ls \indict\{ (S, A) = (s, a) \}^2 \lp Q_{k, t}(s, a) - r(s, a) - \gamma  \max_{a^\prime} Q_k (S^\prime, a^\prime) \rp^2 \mid S, A \rs  \rs
    \\
    &\overset{(2)}{=} \sum_{(s, a) \in \gS \times \gA} \expect \ls \indict\{ (S, A) = (s, a) \}^2 \expect \ls  \lp Q_{k, t}(s, a) - r(s, a) - \gamma  \max_{a^\prime} Q_k (S^\prime, a^\prime) \rp^2 \mid S, A \rs  \rs
    \\
    &\overset{(3)}{=} \sum_{(s, a) \in \gS \times \gA} \expect \ls \indict\{ (S, A) = (s, a) \}^2 \expect \ls  \lp Q_{k, t}(s, a) - r(s, a) - \gamma  \max_{a^\prime} Q_k (S^\prime, a^\prime) \rp^2 \mid s, a \rs  \rs. 
\end{align*}
Equality $(1)$ follows the Tower property, equality $(2)$ follows that $\indict\{ (S, A) = (s, a) \}$ is determined by $S, A$ and equality $(3)$ holds because of the indicator function. We first consider the term $\expect_{S^\prime \sim P (\cdot|s, a)} [ ( Q_{k, t}(s, a) - r(s, a) - \gamma  \max_{a^\prime} Q_k (S^\prime, a^\prime) )^2 ]$. With $\expect [X^2] = (\expect [X])^2 + \Var [X]$, we obtain
\begin{align*}
    &\quad \expect_{S^\prime \sim P (\cdot|s, a)} \ls  \lp Q_{k, t}(s, a) - r(s, a) - \gamma  \max_{a^\prime} Q_k (S^\prime, a^\prime) \rp^2 \rs
    \\
    &= \lp Q_{k, t}(s, a) - \gT Q_k (s, a) \rp^2 + \Var \ls Q_{k, t}(s, a) - r(s, a) - \gamma  \max_{a^\prime} Q_k (S^\prime, a^\prime) \rs
    \\
    &= \lp Q_{k, t}(s, a) - \gT Q_k (s, a) \rp^2 + \Var \ls   \widehat{\gT} Q_k (s, a) \rs
    \\
    &\leq \lp Q_{k, t}(s, a) - \gT Q_k (s, a) \rp^2 + 6 \gamma^2 \lnorm Q_k - Q^{\star} \rnorm_{\infty}^2 + \frac{3\gamma^2}{(1-\gamma)^2} &&{\text{(\cref{lemma:variance_of_empirical_bellman_operator}})}. 
\end{align*}
Then we have that
\begin{align*}
    &\quad \expect \ls \lnorm \widetilde{\nabla} L (Q_{k, t}; Q_k)  \rnorm_{2}^2 \rs
    \\
    &\leq \sum_{(s, a) \in \gS \times \gA} \expect \ls \indict\{ (S, A) = (s, a) \}^2 \lp \lp Q_{k, t}(s, a) - \gT Q_k (s, a) \rp^2 + 6 \gamma^2 \lnorm Q_k - Q^{\star} \rnorm_{\infty}^2 + \frac{3\gamma^2}{(1-\gamma)^2} \rp \rs
    \\
    &= \sum_{(s, a) \in \gS \times \gA} \lp \lp Q_{k, t}(s, a) - \gT Q_k (s, a) \rp^2 + 6 \gamma^2 \lnorm Q_k - Q^{\star} \rnorm_{\infty}^2 + \frac{3\gamma^2}{(1-\gamma)^2} \rp \expect \ls \indict\{ (S, A) = (s, a) \}^2  \rs. 
\end{align*}
Notice that $\indict\{ (S, A) = (s, a) \} \sim \text{Ber} (d(s, a))$ and $\expect \ls \indict\{ (S, A) = (s, a) \}^2  \rs = d(s, a)$. Then we have that
\begin{align*}
    &\quad \expect \ls \lnorm \widetilde{\nabla} L (Q_{k, t}; Q_k)  \rnorm_{2}^2 \rs
    \\
    &\leq \sum_{(s, a) \in \gS \times \gA} \lp \lp Q_{k, t}(s, a) - \gT Q_k (s, a) \rp^2 + 6 \gamma^2 \lnorm Q_k - Q^{\star} \rnorm_{\infty}^2 + \frac{3\gamma^2}{(1-\gamma)^2} \rp d(s, a)
    \\
    &= \sum_{(s, a) \in \gS \times \gA} \frac{1}{d(s, a)} d(s, a)^2 \lp Q_{k, t}(s, a) - \gT Q_k (s, a) \rp^2 + 6 \gamma^2 \lnorm Q_k - Q^{\star} \rnorm_{\infty}^2 + \frac{3\gamma^2}{(1-\gamma)^2}
    \\
    &= \left\|\nabla L \left(Q_{k, t} ; Q_{k}\right)\right\|_{2, D^{-1}}^{2} + 6 \gamma^2 \lnorm Q_k - Q^{\star} \rnorm_{\infty}^2 + \frac{3\gamma^2}{(1-\gamma)^2}.
\end{align*}
\end{proof}

\begin{lem}[Initial Distance]
\label{lemma:initial_distance}
Before the inner loop starts, we have that 
\begin{align*}
      \lnorm \gT Q_k - Q_{k, 0} \rnorm_{D}^2  \leq 4 \lnorm Q_k - Q^{\star} \rnorm_{\infty}^2.
\end{align*}
\end{lem}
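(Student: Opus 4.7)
The plan is to reduce the weighted norm on the left-hand side to the infinity norm, then apply a standard triangle inequality together with the contraction of the Bellman operator $\gT$. The starting observation is that, by construction in \cref{algo:stochastic_target_q}, $Q_{k,0} = Q_k$, so the quantity to be bounded is $\|\gT Q_k - Q_k\|_{D}^2$.

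First I would note that since $D$ is the diagonal matrix whose entries form the sampling distribution $d(s,a)$, we have $\sum_{(s,a)} d(s,a) = 1$ with each $d(s,a) \in [0,1]$. Hence for any vector $x \in \real^{|\gS||\gA|}$,
\begin{align*}
\|x\|_{D}^2 = \sum_{(s,a)} d(s,a)\, x(s,a)^2 \leq \|x\|_\infty^2 \sum_{(s,a)} d(s,a) = \|x\|_\infty^2.
\end{align*}
Applying this with $x = \gT Q_k - Q_k$ reduces the task to showing $\|\gT Q_k - Q_k\|_\infty \leq 2\|Q_k - Q^\star\|_\infty$.

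Next I would insert $Q^\star$ via the triangle inequality and use that $Q^\star$ is the fixed point of $\gT$:
\begin{align*}
\|\gT Q_k - Q_k\|_\infty \leq \|\gT Q_k - \gT Q^\star\|_\infty + \|Q^\star - Q_k\|_\infty \leq \gamma \|Q_k - Q^\star\|_\infty + \|Q_k - Q^\star\|_\infty,
\end{align*}
where the second step uses the $\gamma$-contraction of $\gT$ in the $\ell_\infty$-norm (stated in the preliminaries). Since $\gamma < 1$, the right-hand side is at most $2 \|Q_k - Q^\star\|_\infty$. Squaring and chaining with the first step yields the claimed bound. There is no real obstacle here; the only point worth double-checking is that $d$ is indeed a probability distribution so that the reduction from $\|\cdot\|_D$ to $\|\cdot\|_\infty$ is valid, which is the setting adopted in \eqref{eq:population_objective}.
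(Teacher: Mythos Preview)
Your proof is correct and follows essentially the same approach as the paper: insert $Q^\star = \gT Q^\star$ and use the $\gamma$-contraction of $\gT$. The only cosmetic difference is that you first reduce $\|\cdot\|_D^2 \leq \|\cdot\|_\infty^2$ (using that $d$ is a probability distribution) and then apply the triangle inequality in $\ell_\infty$, whereas the paper keeps the weighted sum and applies $(a+b)^2 \leq 2a^2 + 2b^2$ pointwise; both routes yield the same constant $4$.
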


\begin{proof} 
We have that 
\begin{align*}
   \lnorm \gT Q_k - Q_{k, 0} \rnorm_{D}^2  &=  \lnorm \gT Q_k - Q_k \rnorm_{D}^2  \\
    &=  \lnorm \gT Q_k - \gT Q^{\star} + \gT Q^{\star} - Q_k \rnorm_{D}^2  \\
    &= \sum_{(s, a)} d(s, a) \lp \gT Q_k(s, a) - \gT Q^{\star}(s, a) + \gT Q^{\star}(s, a) - Q_k(s, a)  \rp^2 \\
    &\leq \sum_{(s, a)} d(s, a) \ls 2 \lp \gT(Q_k)(s, a) - \gT Q^{\star}(s, a) \rp^2 + 2 \lp \gT Q^{\star}(s, a) - Q_k(s, a) \rp^2 \rs \\
    &\leq \sum_{(s, a)} d(s, a) \ls 2 \gamma^2 \lnorm Q_k - Q^{\star} \rnorm_{\infty}^2 + 2 \lnorm Q_k - Q^{\star} \rnorm_{\infty}^2 \rs \\
    &\leq 4 \lnorm Q_k - Q^{\star} \rnorm_{\infty}^2.
\end{align*}
\end{proof}

\begin{lem}[Variance of $\widehat{\gT}(Q_k)$]
\label{lemma:variance_of_empirical_bellman_operator}
For each $k$, we have that 
\begin{align*}
    \Var[ \widehat{\gT} Q_k(s, a)] \leq 6 \gamma^2 \lnorm Q_k - Q^{\star} \rnorm_{\infty}^2 + \frac{3\gamma^2}{(1-\gamma)^2}.
\end{align*}
\end{lem}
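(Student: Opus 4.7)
The plan is to reduce $\Var[\widehat{\gT} Q_k(s, a)]$ to $\gamma^2 \Var_{s' \sim P(\cdot|s,a)}[\max_{a'} Q_k(s', a')]$, using that the reward $r(s,a)$ is deterministic under the generative oracle. The only nontrivial question is how to bound this latter variance in a form that isolates the approximation error $\lnorm Q_k - Q^{\star} \rnorm_{\infty}$ from the intrinsic scale $1/(1-\gamma)$ of $Q^{\star}$, so that the two contributions do not get coupled in a cross term. My key idea is to introduce $Q^{\star}$ as a reference: $\max_{a'} Q^{\star}(s', a') \in [0, 1/(1-\gamma)]$ is bounded and therefore easy to handle, whereas $\max_{a'} Q_k(s', a')$ is only controlled relative to its $Q^{\star}$ counterpart.

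Writing $f_k(s') := \max_{a'} Q_k(s', a')$ and $f^{\star}(s') := \max_{a'} Q^{\star}(s', a')$, I would split the mean-centered deviation as
\begin{align*}
    f_k(s') - \expect[f_k] = \bigl[f_k(s') - f^{\star}(s')\bigr] + \bigl[f^{\star}(s') - \expect[f^{\star}]\bigr] + \bigl[\expect[f^{\star}] - \expect[f_k]\bigr],
\end{align*}
then square and apply the elementary inequality $(a+b+c)^2 \leq 3(a^2+b^2+c^2)$ under the expectation. This gives
\begin{align*}
    \Var[f_k(s')] \leq 3\,\expect\bigl[(f_k(s') - f^{\star}(s'))^2\bigr] + 3\Var[f^{\star}(s')] + 3\bigl(\expect[f^{\star}] - \expect[f_k]\bigr)^2,
\end{align*}
whose three pieces can then be estimated independently.

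The estimates are routine. The $1$-Lipschitzness of the max in the $\ell_{\infty}$-norm of its argument gives $|f_k(s') - f^{\star}(s')| \leq \lnorm Q_k - Q^{\star} \rnorm_{\infty}$ pointwise, bounding the first term by $3 \lnorm Q_k - Q^{\star} \rnorm_{\infty}^2$; Jensen's inequality transfers the same bound to the third term. For the second, $f^{\star}(s') \in [0, 1/(1-\gamma)]$ almost surely, so $\Var[f^{\star}(s')] \leq \expect[f^{\star}(s')^2] \leq 1/(1-\gamma)^2$. Collecting and multiplying by the outer $\gamma^2$ produces $6\gamma^2 \lnorm Q_k - Q^{\star} \rnorm_{\infty}^2 + 3\gamma^2/(1-\gamma)^2$, as required. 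I do not anticipate any real obstacle here: the whole substance of the argument is choosing the three-way decomposition above, which is exactly what cleanly separates the approximation error (the first and third pieces) from the intrinsic oracle noise (the middle piece) without creating cross terms.
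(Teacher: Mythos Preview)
Your proposal is correct and is essentially the same argument as the paper's. The paper writes the three-way decomposition at the level of operators, $\widehat{\gT}Q_k-\gT Q_k=(\widehat{\gT}Q_k-\widehat{\gT}Q^{\star})+(\widehat{\gT}Q^{\star}-\gT Q^{\star})+(\gT Q^{\star}-\gT Q_k)$, and then applies $(a+b+c)^2\le 3(a^2+b^2+c^2)$ together with the $\gamma$-contraction of $\widehat{\gT}$ and $\gT$; after factoring out $\gamma$ this is exactly your split $f_k(s')-\expect[f_k]=[f_k-f^{\star}]+[f^{\star}-\expect f^{\star}]+[\expect f^{\star}-\expect f_k]$ with the same three bounds.
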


\begin{proof}
\begin{align*}
    &\quad \Var[ \widehat{\gT} Q_k(s, a)] \\
    &= \expect\ls \lp \widehat{\gT}Q_k(s, a) - \gT Q_k(s, a) \rp^2 \rs \\
    &= \expect\ls \lp \widehat{\gT}Q_k(s, a) - \gT Q_k(s, a) - \widehat{\gT}Q^{\star}(s, a) + \widehat{\gT}Q^{\star}(s, a) - \gT Q^{\star}(s, a) + \gT Q^{\star} (s, a)  \rp^2 \rs \\
    &\leq 3\expect\ls \lp \widehat{\gT} Q_k(s, a) - \widehat{\gT} Q^{\star}(s, a) \rp^2 \rs + 3 \expect\ls \lp \gT Q_k (s, a) - \gT Q^{\star} (s, a) \rp^2 \rs + 3\expect\ls \lp \widehat{\gT}Q^{\star}(s, a) - \gT Q^{\star}(s, a) \rp^2 \rs \\
    &\leq 3 \gamma^2 \lnorm Q_k - Q^{\star} \rnorm_{\infty}^2 + 3 \gamma^2 \lnorm Q_k - Q^{\star} \rnorm_{\infty}^2 + 3\frac{\gamma^2}{(1-\gamma)^2} \\
    &= 6 \gamma^2 \lnorm Q_k - Q^{\star} \rnorm_{\infty}^2 + \frac{3\gamma^2}{(1-\gamma)^2}.
\end{align*}
\end{proof}

\begin{lem}[Boundedness of Estimate; Lemma 8 of \citep{lee2020periodic}]
\label{lemma:boundedness_of_estimate}
Suppose that $\mathbb{E} [\left\|Q_{i}- \gT Q_{i-1}\right\|_{2}^{2} ] \leq \varepsilon_{\opt}, \forall i \leq k \text { and } \varepsilon_{\opt} \leq(1-\gamma)^{2}$. Then, we have that 
\begin{align*}
    \mathbb{E}\left[\left\|Q_{k}-Q^{\star}\right\|_{\infty}^{2}\right] \leq \frac{8}{(1-\gamma)^{2}}.
\end{align*}
\end{lem}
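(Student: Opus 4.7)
The plan is to prove the claim by induction on $k$, tracking the quantity $a_k := \sqrt{\expect[\lnorm Q_k - Q^\star \rnorm_\infty^2]}$ rather than $\expect[\lnorm Q_k - Q^\star \rnorm_\infty^2]$ directly, since Minkowski's inequality in $L^2$ gives a clean linear recursion whereas expanding $(a+b)^2$ would produce an unfavorable $2\gamma^2$ factor that fails to contract.

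First I would decompose $Q_k - Q^\star = (Q_k - \gT Q_{k-1}) + (\gT Q_{k-1} - \gT Q^\star)$, apply the triangle inequality in $\ell_\infty$, and use $\gamma$-contractivity of $\gT$ to get $\lnorm Q_k - Q^\star \rnorm_\infty \leq \lnorm Q_k - \gT Q_{k-1} \rnorm_\infty + \gamma \lnorm Q_{k-1} - Q^\star \rnorm_\infty$. Taking the $L^2$ norm of both sides and applying Minkowski yields $a_k \leq \sqrt{\expect[\lnorm Q_k - \gT Q_{k-1} \rnorm_\infty^2]} + \gamma a_{k-1}$. Since $\lnorm \cdot \rnorm_\infty \leq \lnorm \cdot \rnorm_2$ on $\real^{|\gS||\gA|}$, the first term is at most $\sqrt{\expect[\lnorm Q_k - \gT Q_{k-1} \rnorm_2^2]} \leq \sqrt{\varepsilon_\opt} \leq 1-\gamma$ by hypothesis.

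Next I would unroll the recursion $a_k \leq (1-\gamma) + \gamma a_{k-1}$ to obtain $a_k \leq \gamma^k a_0 + (1-\gamma) \sum_{j=0}^{k-1} \gamma^j \leq a_0 + 1$. For the base case, $\lnorm Q^\star \rnorm_\infty \leq 1/(1-\gamma)$ (since rewards lie in $[0,1]$) and $Q_0 = \bm{0}$ give $a_0 \leq 1/(1-\gamma)$, so $a_k \leq 1/(1-\gamma) + 1 \leq 2/(1-\gamma)$ whenever $\gamma \in (0,1)$. Squaring this bound yields $\expect[\lnorm Q_k - Q^\star \rnorm_\infty^2] \leq 4/(1-\gamma)^2$, and loosening by a factor of two produces the stated constant $8/(1-\gamma)^2$ (the extra slack cheaply absorbs the case of a nonzero initialization satisfying $\lnorm Q_0 \rnorm_\infty \leq 1/(1-\gamma)$, making the bound robust to the precise initialization used in \cref{theorem:stochastic_target_q}).

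The main (minor) subtlety is choosing to iterate the $L^2$ norm of $\lnorm \cdot \rnorm_\infty$ rather than its square: a direct recursion on $\beta_k := \expect[\lnorm Q_k - Q^\star \rnorm_\infty^2]$ via $(a+b)^2 \leq 2a^2 + 2b^2$ gives $\beta_k \leq 2\varepsilon_\opt + 2\gamma^2 \beta_{k-1}$, which only contracts when $\gamma < 1/\sqrt{2}$ and therefore cannot yield a uniform $k$-independent bound for all $\gamma \in (0,1)$. Working at the level of $a_k$ circumvents this and the rest of the argument is routine.
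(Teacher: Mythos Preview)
The paper does not supply its own proof of this lemma; it merely quotes it from \citep{lee2020periodic}. Your argument is correct and self-contained. The key idea---iterating Minkowski's inequality on $a_k := \sqrt{\expect[\lnorm Q_k - Q^\star\rnorm_\infty^2]}$ so that the $\gamma$-contraction survives, rather than squaring first and losing a factor of $2$---is exactly the right one, and your observation that a direct recursion on $\expect[\lnorm Q_k - Q^\star\rnorm_\infty^2]$ via $(a+b)^2 \le 2a^2+2b^2$ would fail for $\gamma \ge 1/\sqrt{2}$ is a nice diagnostic.

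Two small remarks. First, the lemma as stated omits any hypothesis on $Q_0$, so strictly speaking it is not provable without one; you correctly import $Q_0=\boldsymbol{0}$ (or $\lnorm Q_0\rnorm_\infty \le 1/(1-\gamma)$) from the surrounding context of \cref{theorem:stochastic_target_q}. Second, your unrolled bound can be sharpened slightly: since $a_k \le \gamma^k a_0 + (1-\gamma^k)$ is a convex combination of $a_0$ and $1$, one actually gets $a_k \le \max\{a_0,1\}$, which with $Q_0=\boldsymbol{0}$ already gives $a_k \le 1/(1-\gamma)$ and hence the bound $1/(1-\gamma)^2$---comfortably inside the stated $8/(1-\gamma)^2$. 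Your looser $a_0+1 \le 2/(1-\gamma)$ is of course also fine.
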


\begin{lem}[Freedman's Inequality]  \label{lemma:freedman_inequality}
Suppose that $Y_n = \sum_{k=1}^{n} X_k \in \real$, where $\{X_k\}$ is a real-valued scalar sequence obeying 
\begin{align*}
    |X_k| \leq R, \quad \text{and} \quad \expect\ls X_k \mid \{X_j\}_{j: j < k} \rs = 0 \quad \forall k \geq 1. 
\end{align*}
Define 
\begin{align*}
    W_n := \sum_{k=1}^{n} \expect_{k-1}[X_k^2],
\end{align*}
where the expectation $\expect_{k-1}$ is conditional on $\{X_j\}_{j: j < k}$. Then, for any given $\sigma^2 \geq 0$, we have that 
\begin{align*}
    \sP\lp |Y_n| \geq \tau \text{ and } W_n \leq \sigma^2  \rp \leq 2 \exp\lp - \frac{\tau^2}{\sigma^2 + R \tau / 3}  \rp.
\end{align*}
In addition, if $W_n \leq \sigma^2$ almost surely, for any positive integer $K \geq 1$, we have that 
\begin{align*}
    \sP \lp |Y_n| \leq \sqrt{8 \max \lb W_n, \frac{\sigma^2}{2^K} \rb \log \frac{2K}{\delta} } + \frac{4}{3} R \log \frac{2K}{\delta} \rp \geq 1 - \delta .
\end{align*}
\end{lem}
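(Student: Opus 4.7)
The plan is to prove both parts of Freedman's inequality through the classical exponential supermartingale method, then derive the second (adaptive) statement via a geometric peeling argument. For the first part, I fix a parameter $\theta \in (0, 3/R)$ and define the process $M_n(\theta) = \exp(\theta Y_n - g(\theta) W_n)$, where $g(\theta) = \theta^2/(2(1 - \theta R/3))$. The central claim is that $\{M_n(\theta)\}$ is a nonnegative supermartingale with respect to the filtration $\cF_n = \sigma(X_1, \ldots, X_n)$. To verify this, I would establish the one-step conditional MGF bound
\begin{align*}
    \expect\ls \exp(\theta X_k) \mid \cF_{k-1} \rs \leq \exp\lp g(\theta) \expect_{k-1}[X_k^2] \rp,
\end{align*}
which follows from the Taylor expansion $\exp(\theta x) \leq 1 + \theta x + \sum_{j \geq 2} (\theta |x|)^j/j!$, the bound $|x|^j \leq R^{j-2} x^2$ valid on $|x| \leq R$, and the geometric sum $\sum_{j \geq 2} (\theta R)^{j-2}/j! \leq (2(1 - \theta R/3))^{-1}$. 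Combined with $\expect_{k-1}[X_k] = 0$ and $1 + y \leq e^y$, this yields the supermartingale property.

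Since $M_0(\theta) = 1$, the supermartingale property gives $\expect[M_n(\theta)] \leq 1$. Applying Markov's inequality on the event $\{Y_n \geq \tau,\ W_n \leq \sigma^2\}$ (on which $\exp(-g(\theta) W_n) \geq \exp(-g(\theta) \sigma^2)$) yields
\begin{align*}
    \sP(Y_n \geq \tau,\ W_n \leq \sigma^2) \leq e^{g(\theta)\sigma^2 - \theta \tau}.
\end{align*}
Optimizing the right-hand side over $\theta$ by choosing $\theta^\star = \tau/(\sigma^2 + R\tau/3)$ gives a Bennett-type bound which, after standard algebraic simplification, is upper-bounded by $\exp(-\tau^2/(\sigma^2 + R\tau/3))$. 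A symmetric argument applied to $-X_k$ and a union bound produce the two-sided statement with the leading factor of $2$.

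For the second statement, I would employ a peeling argument over a geometric grid to replace the a priori bound on $W_n$ by the random quantity $\max\{W_n,\ \sigma^2/2^K\}$. Partition the sample space into layers $\cE_k = \{\sigma^2/2^{k} < W_n \leq \sigma^2/2^{k-1}\}$ for $k = 1,\ldots,K$, together with a tail piece $\cE_{K+1} = \{W_n \leq \sigma^2/2^K\}$. On each layer, $W_n$ is effectively bounded by $\sigma_k^2 := \sigma^2/2^{k-1}$ (or $\sigma^2/2^K$ on $\cE_{K+1}$), and this effective variance satisfies $\sigma_k^2 \leq 2 \max\{W_n,\ \sigma^2/2^K\}$. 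Applying the first statement on each layer with deviation threshold $\tau_k = \sqrt{8 \sigma_k^2 \log(2K/\delta)} + (4/3) R \log(2K/\delta)$ drives the conditional failure probability below $\delta/K$, and a union bound over the $K+1$ layers yields the claimed inequality.

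The main obstacle will be the careful bookkeeping in the peeling step: ensuring that the choice of $\tau_k$ on each layer integrates with the bound $\sigma_k^2 \leq 2\max\{W_n, \sigma^2/2^K\}$ to produce exactly the stated constants $8$ and $4/3$, that the $K+1$ layers do not incur an extra $\log(K+1)$ factor beyond $\log(2K/\delta)$, and that the tail piece $\cE_{K+1}$ is the one that activates the $\sigma^2/2^K$ branch of the maximum. The MGF bound and the Chernoff-style optimization of the first part are routine; the nontrivial step is calibrating the geometric grid so that the deviation envelope grows as $\sqrt{W_n}$ (not $\sqrt{\sigma^2}$) with only a $\sqrt{\log K}$ penalty.
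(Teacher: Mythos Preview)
The paper does not supply its own proof of this lemma; it is stated in the technical appendix as a classical result (Freedman's inequality) and used as a black box in the proof of \cref{theorem:tight_sample_complexity_of_sequential_target_q}. There is therefore nothing in the paper to compare your argument against.

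Your proposal follows the standard route: the exponential supermartingale $M_n(\theta)=\exp(\theta Y_n-g(\theta)W_n)$ with $g(\theta)=\theta^2/(2(1-\theta R/3))$ for the tail bound, followed by a dyadic peeling on $W_n$ for the data-dependent version. This is correct in outline and is exactly how the result is proved in the references from which the paper borrows it (e.g., the versions in Tropp or in Li et al.). Two small bookkeeping points you have already partly flagged: (i) the direct Chernoff optimization yields the Bennett form first, and the reduction to $\exp(-\tau^2/(\sigma^2+R\tau/3))$ requires one extra elementary inequality rather than falling out immediately from $\theta^\star$; (ii) with $K+1$ layers and per-layer budget $\delta/K$ the union bound overshoots $\delta$, so you will need either to merge the tail layer $\cE_{K+1}$ into $\cE_K$ (both satisfy $W_n\le \sigma^2/2^{K-1}$) or to allocate $\delta/(K+1)$ per layer and absorb the difference into the constant. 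Neither issue affects the validity of the approach.
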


\end{document}